\newcommand{\widgraph}[2]{\includegraphics[keepaspectratio,width=#1]{#2}}
\theoremstyle{plain}
\newtheorem{theorem}{Theorem}
\newtheorem{proposition}{Proposition}
\newtheorem{lemma}{Lemma}
\theoremstyle{definition}
\newtheorem{definition}{Definition}
\theoremstyle{remark}
\newtheorem{remark}{Remark}
\icmltitlerunning{Sliced Wasserstein with Random-Path Projecting Directions}
\begin{document}

\twocolumn[
\icmltitle{Sliced Wasserstein with Random-Path Projecting Directions}

% It is OKAY to include author information, even for blind
% submissions: the style file will automatically remove it for you
% unless you've provided the [accepted] option to the icml2024
% package.

% List of affiliations: The first argument should be a (short)
% identifier you will use later to specify author affiliations
% Academic affiliations should list Department, University, City, Region, Country
% Industry affiliations should list Company, City, Region, Country

% You can specify symbols, otherwise they are numbered in order.
% Ideally, you should not use this facility. Affiliations will be numbered
% in order of appearance and this is the preferred way.
\icmlsetsymbol{equal}{*}

\begin{icmlauthorlist}
\icmlauthor{Khai Nguyen}{yyy}
\icmlauthor{Shujian Zhang}{yyy}
\icmlauthor{Tam Le}{comp}
\icmlauthor{Nhat Ho}{yyy}
%\icmlauthor{}{sch}
%\icmlauthor{}{sch}
\end{icmlauthorlist}

\icmlaffiliation{yyy}{Department of Statistics and Data Sciences, University of Texas at Austin, USA}
\icmlaffiliation{comp}{Department of Advanced Data Science, The Institute of Statistical Mathematics (ISM), Japan}

\icmlcorrespondingauthor{Khai Nguyen}{khainb@utexas.edu}

% You may provide any keywords that you
% find helpful for describing your paper; these are used to populate
% the "keywords" metadata in the PDF but will not be shown in the document
\icmlkeywords{Machine Learning, ICML, Optimal Transport, Sliced Wasserstein, Diffusion Models. Generative Models, Random-path, Projecting Directions.}

\vskip 0.3in
]

% this must go after the closing bracket ] following \twocolumn[ ...

% This command actually creates the footnote in the first column
% listing the affiliations and the copyright notice.
% The command takes one argument, which is text to display at the start of the footnote.
% The \icmlEqualContribution command is standard text for equal contribution.
% Remove it (just {}) if you do not need this facility.

%\printAffiliationsAndNotice{}  % leave blank if no need to mention equal contribution
\printAffiliationsAndNotice{} % otherwise use the standard text.

\begin{abstract}
Slicing distribution selection has been used as an effective technique to improve the performance of parameter estimators based on minimizing sliced Wasserstein distance in applications. Previous works either utilize expensive optimization to select the slicing distribution or use slicing distributions that require expensive sampling methods. In this work, we propose an optimization-free slicing distribution that provides a fast sampling for the Monte Carlo estimation of expectation. In particular, we introduce the \textit{random-path} projecting direction (RPD) which is constructed by leveraging the normalized difference between two random vectors following the two input measures. From the RPD, we derive the random-path slicing distribution (RPSD) and two variants of sliced Wasserstein, i.e., the Random-Path Projection Sliced Wasserstein (RPSW) and the Importance Weighted Random-Path Projection Sliced Wasserstein (IWRPSW). We then discuss the topological, statistical, and computational properties of RPSW and IWRPSW. Finally, we showcase the favorable performance of RPSW and IWRPSW in gradient flow and the training of denoising diffusion generative models on images.

\end{abstract}

\section{Introduction}
\label{sec:introduction}
Utilizing the closed-form solution of optimal transport in one dimension~\cite{peyre2020computational}, the sliced Wasserstein~\cite{bonneel2015sliced} (SW) distance is computationally and statistically scalable. In particular, SW has the time complexity of $\mathcal{O}(n\log n)$ and the space complexity of $\mathcal{O}(n)$ when dealing with two probability measures that have at most $n$ supports. Moreover, the SW does not suffer from the curse of dimensionality since it has sample complexity of $\mathcal{O}(n^{-1/2})$~\cite{nadjahi2020statistical,nietert2022statistical}. As evidence for the effectiveness of the SW distance, many applications have leveraged SW as the key to improving performance e.g.,  generative models~\cite{deshpande2018generative}, domain adaptation~\cite{lee2019sliced},  point-cloud upsampling~\cite{savkin2022lidar}, clustering~\cite{kolouri2018slicedgmm}, gradient flows~\cite{bonet2022efficient}, and so on.

A key component of the SW is the slicing distribution, i.e., the distribution of the projecting direction. The slicing distribution controls the contribution of each projecting direction to the final value of the SW. A \textit{statistically} desired slicing distribution should satisfy two properties. The first property is to assign a high probability to ``informative" projecting directions. In some applications that involve weak convergence of measures, such as generative modeling~\cite{deshpande2018generative} and gradient flow~\cite{bonet2022efficient}, an informative direction can be interpreted as a discriminative direction that highlights the difference between two measures after projection. The second property is to have the maximal projecting direction in the support set, i.e., the projecting direction that can lead to the maximum projected distance. This property ensures the identity of indiscernible of the expected value. To ensure the second property, a more relaxed criterion can be considered, which involves having a continuous density on the unit-hypersphere.

Beyond statistical properties, two \textit{computational} properties are desired for the slicing distribution. Firstly, the slicing distribution should be stable to obtain, meaning that parameters of the slicing distribution can be computationally identifiable. Secondly, it should be fast and efficient to sample projecting directions from the slicing distribution. This property is essential, as Monte Carlo samples are used to approximate the intractable expectation with respect to the slicing distribution.

The conventional SW~\cite{bonneel2015sliced} utilizes the uniform distribution over the unit-hypersphere as the slicing distribution, which is a fixed, continuous, and easy-to-sample distribution. However, the uniform slicing distribution is not discriminative. Distributional sliced Wasserstein~\cite{nguyen2021distributional} (DSW) performs optimization to find the best slicing distribution belonging to a chosen family of distributions over the unit-hypersphere, aiming to maximize the expected projected distance. When the family is a collection of Dirac measures, DSW reverts to max sliced Wasserstein~\cite{deshpande2019max} (Max-SW). DSW and Max-SW are discriminative, but they are not stable, computationally expensive, and require a differentiable ground metric. This is because they involve iterative gradient-based optimization, which may not guarantee a global optimum. Energy-based sliced Wasserstein~\cite{nguyen2023energy} (EBSW) utilizes the energy-based slicing distribution, which has a density proportional to an increasing function of the projected distance. The energy-based slicing distribution is continuous, optimization-free, and discriminative. However, it requires more sophisticated sampling techniques to obtain projecting directions, such as importance sampling and Markov chain Monte Carlo (MCMC), due to the unnormalized density. Moreover, these sampling techniques lead to biased weighted estimates.

To design an effective slicing distribution, the key is to relate the \textit{continuous} slicing distribution with  the two input measures. This connection can be achieved through optimization, as demonstrated in DSW~\cite{nguyen2021distributional}, or by directly defining the slicing distribution based on the two input measures, as seen in EBSW~\cite{nguyen2023energy}. Designing an efficient slicing distribution poses a more challenging task. Interactive procedures, such as optimization (as in DSW) and sequential sampling (as in EBSW with MCMC), should be avoided. The ideal scenario is to have a slicing distribution that allows for i.i.d. sampling, making it parallelizable. Using a proposal distribution, importance sampling in EBSW could yield a fast estimate. However, importance sampling requires a careful choice of the proposal distribution for accurate estimation, especially in high dimensions. Additionally, importance sampling leads to biased estimates in the case of unnormalized densities.

To address the challenge, we propose a novel, effective, and efficient slicing distribution obtained based on a new notation of projecting direction, named \textit{random-path} projecting direction (RPD). The idea is to first create a `path' between two input measures, then project the two measures along that path and conduct the one-dimensional Wasserstein comparison. Here, a path is defined as the normalized difference between two random variables that are distributed with two input measures, respectively, with an additional random perturbation for continuity guarantee. Since the path is defined as the random difference, it captures the directions in which the two measures differ. Therefore, using the distribution of the random-paths can lead to a discriminative slicing distribution. In addition, sampling a random path is efficient since it only requires the mild assumption that two input measures are easy to sample from, a condition usually held in machine learning applications.

\textbf{Contribution.} In summary, our contributions are three-fold:

1.  We propose a new type of projecting direction, named \textit{random-path} projecting direction (RPD), which involves a random perturbation of the normalized difference between two random variables distributed under two input measures. From the RPD, we derive the random-path slicing distribution (RPSD), which is guaranteed to be continuous. Despite having intractable density, the RPSD is fast and simple to sample random projecting directions.

2. From the RPSD, we introduce two novel variants of sliced Wasserstein. The first variant is called \textit{random-path projection sliced Wasserstein} (RPSW), which replaces the uniform distribution in SW with the RPSD. The second variant is \textit{importance-weighted random-path projection sliced Wasserstein} (IWRPSW). It is defined as an expectation of a weighted average of multiple randomly projected distances with RPDs. We then establish theoretical properties of RPSW and IWRPSW, encompassing topological, statistical, and computational aspects. In greater detail, we delve into the metricity of RPSW and IWRPSW, their connections to other SW variants, their sample complexity, and their computational complexities using Monte Carlo methods.

3. We conduct a comparative analysis of the proposed RPSW and IWRPSW against existing SW variants, including SW, Max-SW, DSW, and EBSW. Our first application involves gradient flow, where distances are utilized to guide a source distribution towards a target distribution. Furthermore, we introduce a new framework for training denoising diffusion models using SW metrics through the augmented generalized mini-batch energy (AGME) distance. In this framework, SW variants serve as the kernel to assess the difference between two random sets. Here, AGME functions as the loss function, aiming to minimize the difference between forward transition distributions and their corresponding reverse transition distributions in denoising diffusion models. In addition to serving as a benchmark for comparing SW variants, our proposed training approach can improve generative quality and sampling speed.

\textbf{Organization.} We begin by reviewing some preliminaries in Section~\ref{sec:preliminaries}. Following that, in Section~\ref{sec:RPSW}, we define the random-path projection direction, the random-path slicing distribution, RPSW, and IWRPSW. We then derive their theoretical and computational properties. Section~\ref{sec:experiments} presents experiments comparing RPSW and IWRPSW with other SW variants. In Section~\ref{sec:conclusion}, we provide concluding remarks. Finally, we defer the proofs of key results and additional materials to the Appendices.

\textbf{Notations.}  For any $d \geq 2$, we denote $\mathbb{S}^{d-1}:=\{\theta \in \mathbb{R}^{d}\mid  ||\theta||_2^2 =1\}$ and $\mathcal{U}(\mathbb{S}^{d-1})$ as  the unit hyper-sphere and its corresponding  uniform distribution .  We denote $\mathcal{P}(\mathcal{X})$ as the set of all probability measures on the set $\mathcal{X}$. For $p\geq 1$, $\mathcal{P}_p(\mathcal{X})$ is the set of all probability measures on the set $\mathcal{X}$ that have finite $p$-moments.  For any two sequences $a_{n}$ and $b_{n}$, the notation $a_{n} = \mathcal{O}(b_{n})$ means that $a_{n} \leq C b_{n}$ for all $n \geq 1$, where $C$ is some universal constant.  
\section{Preliminaries}
\label{sec:preliminaries}
We first review some essential preliminaries.

\textbf{One-dimensional Wasserstein.} Let $\mu$ and $\nu$ be two one-dimensional measures belongs to $\mathcal{P}_p(\mathbb{R})$,  the Wasserstein distance has a closed form which is 
\begin{align*}
    \text{W}_p^p(\mu,\nu) &= \inf_{\pi \in \Pi(\mu,\nu)} \int_{\mathbb{R}\times \mathbb{R}}  |x-y|^pd\pi(x,y)\\& =
     \int_0^1 |F_{\mu}^{-1}(z) - F_{\nu}^{-1}(z)|^{p} dz 
\end{align*}
where $F_{ \mu}$ and $F_{ \nu}$  are  the cumulative
distribution function (CDF) of $ \mu$ and $ \nu$ respectively.

\textbf{Sliced Wasserstein. }  To utilize the closed-form, the sliced Wasserstein distance averages all possible projected Wasserstein distances i.e., the definition of sliced Wasserstein (SW) distance~\cite{bonneel2015sliced} between two probability measures $\mu \in \mathcal{P}_p(\mathbb{R}^d)$ and $\nu\in \mathcal{P}_p(\mathbb{R}^d)$ is:
\begin{align}
\label{eq:SW}
    \text{SW}_p^p(\mu,\nu)  =  \mathbb{E}_{ \theta \sim \mathcal{U}(\mathbb{S}^{d-1})} [\text{W}_p^p (\theta \sharp \mu,\theta \sharp \nu)],
\end{align}
where $\theta \sharp \mu$ is the push-forward measures of $\mu$ through the function $f:\mathbb{R}^{d} \to \mathbb{R}$ that is $f(x) = \theta^\top x$. However, the expectation in the definition of the SW distance is intractable to compute. Therefore, the Monte Carlo scheme is employed to approximate the value:
\begin{align*}
    \widehat{\text{SW}}_{p}^p(\mu,\nu;L) = \frac{1}{L} \sum_{l=1}^L\text{W}_p^p (\theta_l \sharp \mu,\theta_l \sharp \nu),
\end{align*}
where $\theta_{1},\ldots,\theta_{L} \overset{i.i.d}{\sim}\mathcal{U}(\mathbb{S}^{d-1})$ and are referred to as projecting directions. The pushfoward measures $\theta_1\sharp \mu,\ldots, \theta_L \sharp \mu$ are called projections of $\mu$ (similarly for $\nu$). The number of Monte Carlo samples $L$ is often referred to as the number of projections. When $\mu$ and $\nu$ are discrete measures that have at most $n$ supports, the time complexity and memory complexity of the SW are $\mathcal{O}(Ln\log n+Ldn)$ and $\mathcal{O}(Ld+Ln)$ respectively.

\textbf{Distributional sliced Wasserstein.} To select a better-slicing distribution,~\citet{nguyen2021distributional}  introduce the distributional sliced Wasserstein (DSW) distance which is defined between two probability measures $\mu \in \mathcal{P}_p(\mathbb{R}^d)$ and $\nu\in \mathcal{P}_p(\mathbb{R}^d)$ as:
\begin{align}
\label{eq:DSW}
    \text{DSW}_p^p(\mu,\nu)  = \max_{\psi \in \Psi } \mathbb{E}_{ \theta \sim \sigma_\psi(\theta))} [\text{W}_p^p (\theta \sharp \mu,\theta \sharp \nu)],
\end{align}
where $\sigma_\psi(\theta) \in \mathcal{P}(\mathbb{S}^{d-1})$, e.g., an implicit distribution~\cite{nguyen2021distributional}, von Mises-Fisher~\cite{jupp1979maximum} (vMF) distribution and Power Spherical (PS)~\cite{cao2018partial} with unknown location parameter $\sigma_\psi(\theta):=(\text{PS}) \text{vMF}(\theta|\epsilon,\kappa)$, $\psi =\epsilon$. After using $T\geq 1$ (projected) stochastic (sub)-gradient ascent iterations to obtain an estimation of the parameter $\hat{\psi}_T$, Monte Carlo samples $\theta_1,\ldots,\theta_L \overset{i.i.d}{\sim} \sigma_{\hat{\psi}_T}(\theta)$ are used to approximate the value of the DSW. The time complexity and space complexity of the DSW are $\mathcal{O}(LTn\log n +LTdn)$ and $\mathcal{O}(Ld+Ln)$  without counting the complexities of sampling from $\sigma_{\hat{\psi}_T}(\theta)$.

\textbf{Max sliced Wasserstein.} By letting the concentration parameter $\kappa \to \infty$, the vMF and PS distributions degenerate to the Dirac distribution, we obtain the max sliced Wasserstein distance~\cite{deshpande2019max}. The definition of max sliced Wasserstein (Max-SW) distance between two probability measures $\mu \in \mathcal{P}_p(\mathbb{R}^d)$ and $\nu\in \mathcal{P}_p(\mathbb{R}^d)$ is:
\begin{align}
\label{eq:MaxSW}
    \text{Max-SW}_p(\mu,\nu)  = \max_{\theta \in \mathbb{S}^{d-1}} \text{W}_p (\theta \sharp \mu,\theta \sharp \nu).
\end{align}
The Max-SW is computed by using $T\geq 1$ iterations of (projected) (sub)-gradient ascent to obtain the approximation of the ``max" projecting direction $\hat{\theta}_T$. The approximated value of the Max-SW is then set to $\text{W}_p (\hat{\theta}_T \sharp \mu,\hat{\theta}_T \sharp \nu)$.  It is worth noting that the optimization problem is non-convex~\cite{nietert2022statistical} which leads to the fact that we cannot obtain the global optimum $\theta^\star$. As a result, the approximation of Max-SW is not a metric even when we let the number of iterations $T \to \infty$.  The time complexity and space complexity of the Max-SW are $\mathcal{O}(Tn\log n +Tdn)$ and $\mathcal{O}(d+n)$. 

\textbf{Energy-based sliced Wasserstein.} In practice, optimization often costs more computation than sampling from a fixed slicing distribution. To avoid expensive optimization, the recent work~\cite{nguyen2023energy} proposes an optimization-free approach that is based on energy-based slicing distribution. The energy-based sliced Wasserstein (EBSW) distance between two probability measures $\mu \in \mathcal{P}_p(\mathbb{R}^d)$ and $\nu\in \mathcal{P}_p(\mathbb{R}^d)$ is defined as follow:
    \begin{align}
        &\text{EBSW}_p^p(\mu,\nu;f) = \mathbb{E}_{\theta \sim \sigma_{\mu,\nu}(\theta;f,p)}\left[ \text{W}_p^p (\theta\sharp \mu,\theta \sharp \nu)\right],
    \end{align}
where $f:[0,\infty) \to (0,\infty)$ is an increasing energy function e.g., $f(x)=e^x$, and $\sigma_{\mu,\nu}(\theta;f,p) \propto f(\text{W}^p_p(\theta \sharp \mu,\theta \sharp \nu))$. EBSW can be approximated by importance sampling with the uniform proposal distribution $\sigma_0 = \mathcal{U}(\mathbb{S}^{d-1})$. For $\theta_1,\ldots,\theta_L \overset{i.i.d}{\sim} \sigma_0(\theta)$, we have: $\widehat{\text{EBSW}}_p^p(\mu,\nu;f,L)= $
\begin{align}
\label{eq:emp_ISEBSW}
   \sum_{l=1}^L   \text{W}_p^p (\theta_l\sharp \mu,\theta_l \sharp \nu)\hat{w}_{\mu,\nu,\sigma_0,f ,p} (\theta_l) ,
\end{align}
for $w_{\mu,\nu,\sigma_0,f,p } (\theta) = \frac{f(\text{W}^p_p(\theta \sharp \mu,\theta \sharp \nu) )}{\sigma_0(\theta)}$ is the importance weighted function and $\hat{w}_{\mu,\nu,\sigma_0,f ,p} (\theta_l) = \frac{w_{\mu,\nu,\sigma_0,f,p } (\theta_l)}{\sum_{l'=1}^L w_{\mu,\nu,\sigma_0,f,p } (\theta_{l'})}$ is the normalized importance weights.  The time complexity and space complexity of the EBSW are also $\mathcal{O}(Ln\log n+Ldn)$ and $\mathcal{O}(Ld+Ln)$ in turn. However, the estimation is biased, i.e., $\mathbb{E}[\widehat{\text{EBSW}}_p^p(\mu,\nu;f,L)]\neq \text{EBSW}_p^p(\mu,\nu;f)$. In addition to importance sampling, Markov Chain Monte Carlo (MCMC) can be used to approximate EBSW, however, they are very computationally expensive~\cite{nguyen2023energy}.
%%%%%%%%%%%%%%%%%%%%%%%%%%%%%%%%%%%%%%%%%%%%%%%%%%%%%%%%%%%%%%%%%%%%%
\section{Random-Path Projection Sliced Wasserstein}
\label{sec:RPSW}
We first discuss the random-path projecting direction and the random-path slicing distribution in Section~\ref{subsec:rpd}. We then discuss the Random-Path Projection Sliced Wasserstein variants in Section~\ref{subsec:rpsw}.

\subsection{Random-Path Projecting Direction}
\label{subsec:rpd}
As discussed,  discriminative projecting directions, i.e., directions that can highlight the difference between two distributions after the projection,  are preferred. Although the energy-based slicing distribution in EBSW can do the job, it is not possible to sample directly from the energy-based slicing distribution. To address the issue, we propose a novel slicing distribution that is based on the new notion of projecting direction, referred to as  ``random-path".

\textbf{Random paths between two measures.} We now define the ``random path" between $\mu \in \mathcal{P}(\mathbb{R}^d)$ and $\nu \in \mathcal{P}(\mathbb{R}^d)$.
\begin{definition}[Random-path]
\label{def:rp}
For any $p \geq 1$, dimension $d \geq 1$,  $\mu \in \mathcal{P}(\mathbb{R}^d)$ and $\nu \in \mathcal{P}(\mathbb{R}^d)$, let $X \sim \mu$ and $Y\sim \nu$, the random-path (RP) is defined as:
\begin{align}
    Z_{X,Y} =X-Y.
\end{align}
\end{definition}

% From Definition~\ref{def:rpd}, the random-path $Z_{X,Y}$ belongs to the set of distribution over the unit-hypersphere $\mathcal{P}(\mathbb{S}^{d-1})$ for any $\mu$ and $\nu$ in $\mathcal{P}(\mathbb{R}^d)$.
The measure of $Z_{X,Y}$  can be written as $\sigma_{\mu-\nu}:=\mu * (-)\sharp \nu$ where $*$ denotes the convolution operator, and $(-)\sharp \nu$ denotes the pushforward measures of $\nu$ through the function $f(x)=-x$. The density of  $\sigma_{\mu-\nu}$ is intractable, however, it is simple to sample from it, i.e., sampling $X \sim \mu$, $Y \sim \nu$, set $Z_{X,Y} =X-Y$, then $Z_{X,Y}  \sim \sigma_{\mu-\nu}$. It is worth noting that $\sigma_{\mu-\nu}\neq \sigma_{\nu-\mu}$.

\textbf{Random-path projecting direction.} From the random path, we can create a random projecting direction by normalizing it i.e., $\frac{Z_{X,Y}}{\|Z_{X,Y}\|_2}$. However, the density of the RP, i.e., $\sigma_{\mu-\nu}$ does not always have full support on $\mathbb{R}^d$ e.g., in discrete cases ($\mu$ and $\nu$ are discrete). Therefore, $\frac{Z_{X,Y}}{\|Z_{X,Y}\|_2}$ does not fully support $\mathbb{S}^{d-1}$ which cannot guarantee theoretical property and harm practical performance. As a solution, we can add a random perturbation around the normalized random path. As long as the random perturbation has continuous density,  the marginal density of the final projecting direction is continuous. Now, we define the random-path projecting direction as follows:

\begin{definition}[Random-path projecting direction]
\label{def:rpd}
For any  $\kappa >0$, dimension $d \geq 1$,  $\mu \in \mathcal{P}(\mathbb{R}^d)$ and $\nu \in \mathcal{P}(\mathbb{R}^d)$, let $X \sim \mu$ and $Y\sim \nu$, the random-path projecting direction (RPD) is:
\begin{align}
    \theta|X,Y,\kappa \sim \sigma_\kappa(\cdot;P_{\mathbb{S}^{d-1}}(X-Y)),
\end{align}
where $P_{\mathbb{S}^{d-1}}(x) = \frac{x}{\|x\|_2}$, and $\sigma_\kappa$ is a location scale distribution on $\mathbb{S}^{d-1}$ such as vMF and PS.
\end{definition}

In Definition~\ref{def:rpd}, the reason for choosing the location-scale family is to guarantee the projecting direction concentrates around the normalized random path. We recall the density of vMF distribution $\text{vMF}(\theta;\epsilon,\kappa)\propto \exp(\kappa\epsilon^\top \theta)$~\cite{jupp1979maximum} and the PS distribution $\text{PS}(\theta;\epsilon,\kappa)  \propto (1+\epsilon^\top \theta)^\kappa$~\cite{de2020power}.

\begin{remark}
    \label{remark:rpd} When $\mu$ and $\nu$ have shared supports, $Z_{X,Y}$ has support at $0$ which makes $P_{\mathbb{S}^{d-1}}(x)$ undefined. For continuous cases, it is not the problem since the probability of $Z_{X,Y}=0$ is 0. For the discrete cases, we can solve the issue by simply adding a small constant to $Z_{X,Y}$ i.e., $Z_{X,Y}+c$, and treat it as the random path. We find that this is not the problem in practice since two interested distributions are often continuous or rarely have shared supports. 
\end{remark}

\textbf{Random-path slicing distribution.} From Definition~\ref{def:rpd}, we can obtain the slicing distribution of the RPD by marginalizing out $X,Y$. In particular, we have the random-path slicing distribution (RPSD) defined as follows:
\begin{definition}
\label{def:rpsd}
For any $0<\kappa<\infty$, dimension $d \geq 1$,  $\mu \in \mathcal{P}(\mathbb{R}^d)$ and $\nu \in \mathcal{P}(\mathbb{R}^d)$, let $X \sim \mu$ and $Y\sim \nu$, the random-path slicing distribution (RPSD) is:
\begin{align*}
    \sigma_{RP}(\theta;\mu,\nu,\sigma_\kappa) = \int\sigma_\kappa(\theta;P_{\mathbb{S}^{d-1}}(x-y)) d\mu(x) d\nu(y).
\end{align*}
\end{definition}
Although $\sigma_{RP}(\theta;\mu,\nu,\sigma_\kappa)$ is intractable, we can sample from it easily by sampling $X \sim \mu, Y \sim \nu$, then $\theta\sim \sigma_\kappa(P_{\mathbb{S}^{d-1}}(X-Y))$. The sampling process can also be parallelized for multiple samples $\theta_1,\ldots,\theta_L$ for  $L>1$.

\begin{remark}
    \label{remark:rpsd} We can rewrite $\sigma_{RP}(\theta;\mu,\nu,\sigma_\kappa) = \int\sigma_\kappa(\theta;P_{\mathbb{S}^{d-1}}(x-y)) d \pi(x,y)$ where $\pi = \mu \otimes \nu$ is the independent coupling between $\mu$ and $\nu$ (efficiently for computation). As a natural extension, we can use other coupling between $\mu$ and $\nu$. However, a more complicated coupling could cost more computation for doing sampling while the benefit of using such coupling is not trivial. 
\end{remark}

\subsection{Random-Path Projection Sliced Wasserstein}
\label{subsec:rpsw}
We now discuss the two sliced Wasserstein variants that are based on the random-path slicing distribution.

\textit{\textbf{Definitions.}}
We first define the random-path projection sliced Wasserstein (RPSW) distance.
\begin{definition}
    \label{def:rpsw}
    For any $p \geq 1$, $d \geq 1$, $0<\kappa<\infty$, two probability measures $\mu \in \mathcal{P}_p(\mathbb{R}^d)$ and $\nu \in \mathcal{P}_p(\mathbb{R}^d)$,  the random-path projection sliced Wasserstein (RPSW) distance  is defined as follows:
    \begin{align*}
        &\text{RPSW}^p_p(\mu,\nu;\sigma_\kappa) =\mathbb{E}_{\theta \sim \sigma_{RP}(\theta;\mu,\nu,\sigma_\kappa)}[W_p^p(\theta \sharp \mu, \theta \sharp \nu)], 
    \end{align*}
    where $\sigma_{RP}$ is defined as in Definition~\ref{def:rpsd}.
\end{definition}
\label{submission}

\begin{remark}
    \label{remark:RPSW} The equivalent definition of RPSW to Definition~\ref{def:rpsw} is: 
    $
        \text{RPSW}^p_p(\mu,\nu;\sigma_\kappa) =\mathbb{E}_{X \sim \mu,Y\sim \nu} \mathbb{E}_{\theta \sim\sigma_\kappa(\theta;P_{\mathbb{S}^{d-1}}(X-Y)) }[W_p^p(\theta \sharp \mu, \theta \sharp \nu)].
    $
\end{remark}

Motivated by the importance of sampling estimation of EBSW in Equation~\eqref{eq:emp_ISEBSW}, we can further adjust the weight of RPDs i.e., $\theta_1,\ldots,\theta_L \sim \sigma_{RP}(\theta;\mu,\nu,\sigma_\kappa)$ by their corresponding projected distance $W_p^p(\theta_1 \sharp \mu, \theta_1 \sharp \nu)),\ldots, W_p^p(\theta_L \sharp \mu, \theta_L \sharp \nu))$. We now define the importance weighted random-path projection sliced Wasserstein (IWRPSW) distance.

\begin{definition}
    \label{def:iwrpsw}
    For any $p \geq 1$, $d \geq 1$, $L \geq 1$, $0<\kappa<\infty$, an increasing function $f:[0,\infty) \to (0,\infty)$, two probability measures $\mu \in \mathcal{P}_p(\mathbb{R}^d)$ and $\nu \in \mathcal{P}_p(\mathbb{R}^d)$,  the importance weighted random-path projection sliced Wasserstein (IWRPSW) is defined as:
    \begin{align*}
        &\text{IWRPSW}^p_p(\mu,\nu;\sigma_\kappa,L,f) =\mathbb{E}_{\theta_1,\ldots, \theta_L\sim \sigma_{RP}(\theta;\mu,\nu,\sigma_\kappa)} \nonumber\\
        &\quad \left[\sum_{l=1}^L W_p^p(\theta_l \sharp \mu, \theta_l \sharp \nu) \frac{f(W_p^p(\theta_l \sharp \mu, \theta_l \sharp \nu))}{\sum_{j=1}^L f(W_p^p(\theta_j \sharp \mu, \theta_j \sharp \nu))} \right], 
    \end{align*}
    where $\sigma_{RP}$ is defined as in Definition~\ref{def:rpsd}.
\end{definition}
Compared to only one random projecting direction in RPSW, IWRPSW utilizes $L$ random projecting directions in the \textit{population} form. 

\textit{\textbf{Topological Properties.}} We first investigate the metricity of  RPSW and IWRPSW.
\begin{theorem} For any $p \geq 1$, $L \geq 1$, $f:[0,\infty) \to (0,\infty)$ and $0<\kappa<\infty$, the random-path projection sliced Wasserstein $\text{RPSW}_{p}(\cdot,\cdot;\sigma_\kappa)$ and the importance weighted random-path projection sliced Wasserstein $\text{IWRPSW}_{p}(\cdot,\cdot;\sigma_\kappa,L)$ are semi-metrics in the probability space on $\mathbb{R}^{d}$, namely RPSW and IWRPSW satisfy non-negativity, symmetry, and identity of indiscernible.  The RPSW  satisfies the ``quasi"-triangle inequality i.e., \begin{align*}
    \text{RPSW}_{p}(\mu_1,\mu_2;\sigma_\kappa) &\leq \text{RPSW}_{p}(\mu_1,\mu_3;\sigma_\kappa,\mu_1,\mu_2)  \\&+ \text{RPSW}_{p}(\mu_3,\mu_2;\sigma_\kappa,\mu_1,\mu_2)
\end{align*} where we have $\text{RPSW}_{p}^p(\mu_1,\mu_3;\sigma_\kappa,\mu_1,\mu_2) = \mathbb{E}_{\theta \sim \sigma_{RP}(\theta;\mu_1,\mu_2,\sigma_\kappa)}[W_p^p(\theta \sharp \mu_1, \theta \sharp \mu_3)]$ and a  similar definition of $\text{RPSW}_{p}^p(\mu_3,\mu_2;\sigma_\kappa,\mu_1,\mu_2)$.
\label{theo:metricity}
\end{theorem}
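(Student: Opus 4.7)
\textbf{Proof plan for Theorem~\ref{theo:metricity}.} I will handle the four properties in the order: non-negativity, symmetry, identity of indiscernibles, and quasi-triangle inequality. Non-negativity is immediate for RPSW since $W_p^p(\theta\sharp\mu,\theta\sharp\nu)\geq 0$, and for IWRPSW because the self-normalized weights $f(W_p^p(\theta_l\sharp\mu,\theta_l\sharp\nu))/\sum_j f(\cdots)$ are non-negative (indeed positive, since $f:[0,\infty)\to(0,\infty)$).

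For symmetry, I will use the key fact that the standard location-scale families on $\mathbb{S}^{d-1}$ (vMF and PS) have densities depending only on the inner product $\epsilon^\top\theta$, so $\sigma_\kappa(\theta;\epsilon)=\sigma_\kappa(-\theta;-\epsilon)$. Combined with the identity $P_{\mathbb{S}^{d-1}}(Y-X)=-P_{\mathbb{S}^{d-1}}(X-Y)$, this means the pushforward of $\sigma_{RP}(\cdot;\nu,\mu,\sigma_\kappa)$ under the antipodal map $\theta\mapsto-\theta$ equals $\sigma_{RP}(\cdot;\mu,\nu,\sigma_\kappa)$. Since $(-\theta)\sharp\mu$ is the reflection of $\theta\sharp\mu$ on $\mathbb{R}$ and the one-dimensional $W_p$ is invariant under a common reflection of both arguments, the integrand $W_p^p(\theta\sharp\mu,\theta\sharp\nu)$ is invariant under $\theta\mapsto-\theta$. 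Applying the substitution $\theta\mapsto-\theta$ in the defining expectation and using symmetry of $W_p$ itself yields $\text{RPSW}_p(\mu,\nu;\sigma_\kappa)=\text{RPSW}_p(\nu,\mu;\sigma_\kappa)$, and the same substitution applied jointly to $(\theta_1,\ldots,\theta_L)$ handles IWRPSW.

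For identity of indiscernibles, the direction $\mu=\nu\Rightarrow\text{RPSW}_p=\text{IWRPSW}_p=0$ is trivial since all projected distances vanish (and the weights remain well-defined because $f(0)>0$). The converse is the main technical step. I first argue that for $0<\kappa<\infty$ the density of vMF/PS is strictly positive on $\mathbb{S}^{d-1}$, so
\[
    \sigma_{RP}(\theta;\mu,\nu,\sigma_\kappa)=\int \sigma_\kappa(\theta;P_{\mathbb{S}^{d-1}}(x-y))\,d\mu(x)\,d\nu(y)>0
\]
for every $\theta\in\mathbb{S}^{d-1}$, provided $\mu\otimes\nu(\{x\ne y\})>0$ (the degenerate case is handled by the small perturbation of Remark~\ref{remark:rpd}). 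Thus $\text{RPSW}_p(\mu,\nu;\sigma_\kappa)=0$ forces $W_p^p(\theta\sharp\mu,\theta\sharp\nu)=0$ for $\sigma_{RP}$-almost every $\theta$; continuity of $\theta\mapsto W_p^p(\theta\sharp\mu,\theta\sharp\nu)$ then upgrades this to all $\theta\in\mathbb{S}^{d-1}$, and the known identity of indiscernibles of SW (equivalently, injectivity of the Radon transform on $\mathcal{P}_p(\mathbb{R}^d)$) yields $\mu=\nu$. For IWRPSW the same conclusion holds: vanishing of the expectation forces $\sum_l w_l\,W_p^p(\theta_l\sharp\mu,\theta_l\sharp\nu)=0$ a.s.\ on $(\mathbb{S}^{d-1})^L$, and since all weights are strictly positive each $W_p^p(\theta_l\sharp\mu,\theta_l\sharp\nu)$ must vanish, so the single-variable argument again applies.

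Finally, the quasi-triangle inequality is the easiest step: I apply the usual triangle inequality for the one-dimensional Wasserstein distance pointwise,
\[
    W_p(\theta\sharp\mu_1,\theta\sharp\mu_2)\leq W_p(\theta\sharp\mu_1,\theta\sharp\mu_3)+W_p(\theta\sharp\mu_3,\theta\sharp\mu_2),
\]
and then take the $L^p(\sigma_{RP}(\cdot;\mu_1,\mu_2,\sigma_\kappa))$ norm in $\theta$ and invoke Minkowski's inequality in $L^p$ to obtain precisely the stated bound. The only subtlety explaining the word \emph{quasi} is that the slicing distribution on the right-hand side must remain anchored at the original pair $(\mu_1,\mu_2)$, since the RPSD depends on the two measures being compared; this is exactly how the modified notation $\text{RPSW}_p(\mu_1,\mu_3;\sigma_\kappa,\mu_1,\mu_2)$ is defined in the statement, so no further work is required. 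I expect the verification of full support of $\sigma_{RP}$ (together with measurability/continuity arguments needed to push a.e.\ equality to everywhere equality) to be the main technical hurdle; the other three properties reduce to well-chosen substitutions or inequalities.
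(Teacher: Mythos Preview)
Your proposal is correct and follows essentially the same route as the paper: non-negativity is immediate; symmetry is obtained via the antipodal substitution $\theta\mapsto-\theta$ together with $\sigma_\kappa(\theta;\epsilon)=\sigma_\kappa(-\theta;-\epsilon)$ and the reflection invariance of $W_p^p(\theta\sharp\mu,\theta\sharp\nu)$; identity of indiscernibles uses full support of $\sigma_{RP}$ plus continuity of $\theta\mapsto W_p(\theta\sharp\mu,\theta\sharp\nu)$ to force all one-dimensional projections to coincide (the paper then writes out the Fourier/slice-projection argument explicitly where you invoke injectivity of the Radon transform, which is equivalent); and the quasi-triangle inequality is exactly the pointwise Wasserstein triangle inequality followed by Minkowski in $L^p(\sigma_{RP}(\cdot;\mu_1,\mu_2,\sigma_\kappa))$. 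If anything, your phrasing of the identity step (strict positivity of the mixed density, then continuity of the integrand to upgrade a.e.\ vanishing to everywhere) is slightly sharper than the paper's appeal to ``continuity of $\sigma_{RP}$''.
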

% In the unusual situation where you want a paper to appear in the
% references without citing it in the main text, use \nocite
The proof of Theorem~\ref{theo:metricity} is given in Appendix~\ref{subsec:proof:theo:metricity}. It is worth noting that the triangle inequality for RPSW and IWRPSW is challenging to prove due to the dependency of the two input measures with the slicing distribution. Therefore, it is unknown if they satisfy the triangle inequality.
\begin{remark}
    \label{remark:symmetry} Although the $\sigma_{RP}(\theta;\mu,\nu,\sigma_\kappa)$ is not symmetric with respective to $\mu$ and $\nu$, RPSW and IWRPSW are still symmetric since $W_p^p(\theta \sharp \mu, \theta \sharp \nu)$ is symmetric with respect to $\theta$ i.e., $W_p^p(\theta \sharp \mu, \theta \sharp \nu)=W_p^p(-\theta \sharp \mu, -\theta \sharp \nu)$ and we have $\sigma_\kappa(\theta;P_{\mathbb{S}^{d-1}}(x-y)) = \sigma_\kappa(-\theta;P_{\mathbb{S}^{d-1}}(y-x))$. We refer to the proof of Theorem~\ref{theo:metricity} for a more detail.
\end{remark}

We now discuss the connection between RPSW and IWRPSW, and their connection to other SW variants and Wasserstein distance.
\begin{proposition}
    \label{prop:connection}
    For any $p \geq 1$, $L \geq 1$, increasing function $f:[0,\infty) \to (0,\infty)$, and $0<\kappa<\infty$, we have the following relationship: \\ $\text{(i) } \text{RPSW}_{p}(\mu, \nu;\sigma_\kappa) \leq \text{IWRPSW}_{p}(\mu, \nu;\sigma_\kappa,L,f)  \leq \text{Max-SW}_p(\mu,\nu) \leq W_{p}(\mu, \nu),$\\
    $\text{(ii) } \lim_{\kappa \to 0} \text{RPSW}_{p}(\mu, \nu;\sigma_\kappa) \to \text{SW}_p(\mu,\nu),$   \\
    $\text{(iii) } \lim_{L \to \infty} \text{IWRPSW}_{p}(\mu, \nu;\sigma_\kappa,L,f) \to \text{EBSW}_p(\mu,\nu;f),$  \\
(iv)  $\text{RPSW}_{p}(\mu, \nu;\sigma_\kappa) \geq \text{RPSW}_{1}(\mu, \nu;\sigma_\kappa)$.
    
\end{proposition}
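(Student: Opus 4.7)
The four claims are essentially independent, so I would prove them one at a time using standard tools (Chebyshev's sum inequality, Cauchy--Schwarz, Jensen, the strong law of large numbers, and weak convergence of vMF/PS to the uniform law). Each claim reduces to a pointwise inequality or limit on the integrand that is then transferred through the expectation.

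\textbf{Part (i).} I would chain three bounds. For the left inequality, fix $(\theta_1,\ldots,\theta_L)$ and set $a_l:=W_p^p(\theta_l\sharp\mu,\theta_l\sharp\nu)$, $b_l:=f(a_l)>0$. Since $f$ is increasing, $(a_l)$ and $(b_l)$ are co-monotone, so Chebyshev's sum inequality gives $\sum_l a_l b_l/\sum_l b_l \geq \tfrac{1}{L}\sum_l a_l$. Taking expectation over $\theta_l \overset{\text{i.i.d.}}{\sim}\sigma_{RP}$ and then the $p$-th root yields $\text{IWRPSW}_p\geq \text{RPSW}_p$. For the middle inequality, each $a_l\leq \text{Max-SW}_p^p$, so the weighted mean is also $\leq \text{Max-SW}_p^p$. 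For the right inequality, for any coupling $\pi\in\Pi(\mu,\nu)$ and any $\theta\in\mathbb{S}^{d-1}$, Cauchy--Schwarz gives $|\theta^\top(x-y)|\leq \|x-y\|_2$, hence $W_p^p(\theta\sharp\mu,\theta\sharp\nu)\leq \int\|x-y\|_2^p\,d\pi$; infimizing over $\pi$ and supremizing over $\theta$ finishes it.

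\textbf{Parts (ii), (iii), (iv).} For (ii), vMF and PS densities with concentration $\kappa$ both degenerate to the uniform density on $\mathbb{S}^{d-1}$ as $\kappa\to 0$; by dominated convergence the marginal $\sigma_{RP}(\cdot;\mu,\nu,\sigma_\kappa)$ also converges weakly to $\mathcal{U}(\mathbb{S}^{d-1})$. Since $\theta\mapsto W_p^p(\theta\sharp\mu,\theta\sharp\nu)$ is continuous and uniformly bounded by $2^{p-1}(\mathbb{E}_\mu\|X\|^p+\mathbb{E}_\nu\|Y\|^p)<\infty$ (an easy consequence of $|\theta^\top x|\leq\|x\|_2$), the integral converges to $\text{SW}_p^p$. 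For (iii), write $a(\theta):=W_p^p(\theta\sharp\mu,\theta\sharp\nu)$; the strong law of large numbers gives $\tfrac{1}{L}\sum_l a(\theta_l)f(a(\theta_l))\to \mathbb{E}_{\sigma_{RP}}[a\,f(a)]$ and $\tfrac{1}{L}\sum_l f(a(\theta_l))\to \mathbb{E}_{\sigma_{RP}}[f(a)]$ a.s., so the inner ratio converges a.s.\ to $\mathbb{E}_{\sigma_{RP}}[a\,f(a)]/\mathbb{E}_{\sigma_{RP}}[f(a)]$, i.e.\ the expectation of $a$ under the reweighted probability $\propto f(a)$, which is the EBSW form; boundedness of the ratio by $\text{Max-SW}_p^p$ lets dominated convergence push the limit past the outer expectation. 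For (iv), the standard $L^p$ ordering $W_p\geq W_1$ for one-dimensional marginals (Hölder applied to any coupling) gives $W_p^p\geq W_1^p$ pointwise in $\theta$, hence $\text{RPSW}_p^p\geq \mathbb{E}_{\sigma_{RP}}[W_1^p]$; Jensen with the convex map $x\mapsto x^p$ gives $\mathbb{E}_{\sigma_{RP}}[W_1^p]\geq (\mathbb{E}_{\sigma_{RP}}[W_1])^p = \text{RPSW}_1^p$, and the $p$-th root closes the argument.

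\textbf{Anticipated obstacle.} The only substantive subtlety is part (iii): because the samples $\theta_l$ are drawn from $\sigma_{RP}$ rather than from the uniform law, the a.s.\ limit $\mathbb{E}_{\sigma_{RP}}[af]/\mathbb{E}_{\sigma_{RP}}[f]$ must be matched with the EBSW expectation under the target density $\propto f(W_p^p)$; doing so amounts to interpreting EBSW's unnormalized density with $\sigma_{RP}$ as the reference measure in place of $\mathcal{U}(\mathbb{S}^{d-1})$, and this bookkeeping with the reference measure is the one step where extra care is needed in reconciling the paper's definitions.
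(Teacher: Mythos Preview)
Your proposal is correct and follows essentially the same route as the paper. For (i) the paper isolates the Chebyshev-type inequality as a separate lemma proved by induction, but it is the same inequality you invoke by name; the remaining two bounds via the supremum and Cauchy--Schwarz match exactly. For (ii) the paper likewise argues that vMF/PS densities tend to a constant as $\kappa\to 0$, proves continuity and boundedness of $\theta\mapsto W_p^p(\theta\sharp\mu,\theta\sharp\nu)$, and passes to the limit. For (iv) the paper's two-step chain $(\mathbb{E}[W_p^p])^{1/p}\geq \mathbb{E}[W_p]\geq \mathbb{E}[W_1]$ is just a reordering of your Jensen-then-$W_p\geq W_1$ argument.

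Your flagged obstacle in (iii) is well taken and, if anything, you are being more careful than the paper. The paper's proof of (iii) simply asserts that the self-normalized average converges to $\text{EBSW}_p^p(\mu,\nu;f)$ without addressing the reference-measure mismatch: since the $\theta_l$ are i.i.d.\ from $\sigma_{RP}$, the almost-sure SLLN limit is $\mathbb{E}_{\sigma_{RP}}[af(a)]/\mathbb{E}_{\sigma_{RP}}[f(a)]$, which equals the EBSW expectation only if the energy density $\propto f(W_p^p)$ is taken with respect to $\sigma_{RP}$ rather than $\mathcal{U}(\mathbb{S}^{d-1})$. The paper does not spell this out, so your instinct to treat this step as the one requiring care is correct.
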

The proof of Proposition~\ref{prop:connection} is given in Appendix~\ref{subsec:proof:prop:connection}. 

% \begin{theorem}
%     \label{theo:weak}
% For any $p \geq 1$, the convergence of probability measures under the random-path sliced Wasserstein distance $\text{RPSW}_{p}(\cdot,\cdot)$ implies weak convergence of probability measures and vice versa.
% \end{theorem}

% The proof of Theorem~\ref{theo:weak} is given in Appendix~\ref{subsec:proof:theo:weak}.
\textit{\textbf{Statistical Properties.}} We now discuss the one-sided sample complexity of RPSW and IWRPSW.

\begin{proposition}
    \label{prop:sample_complexity}
    Let $X_{1}, X_{2}, \ldots, X_{n}$  be i.i.d. samples from the probability measures $\mu$ being supported on compact set of $\mathbb{R}^{d}$.  We denote the empirical measures $\mu_{n} = \frac{1}{n} \sum_{i = 1}^{n} \delta_{X_{i}}$. Then, for any $p > 1$, $L\geq 1$ and  $0<\kappa<\infty$, there exists a universal constant $C > 0$ such that
\begin{align*}
    \mathbb{E} [\text{RPSW}_{p}(\mu_{n},\mu;\sigma_\kappa)] &\leq \mathbb{E} [\text{IWRPSW}_{p} (\mu_{n},\mu;\sigma_\kappa,L,f)] \\&\leq C \sqrt{\frac{(d+1)\log (n+1)}{n}},
\end{align*}
where the outer expectation is taken with respect to $X_{1}, X_{2}, \ldots, X_{n}$.
\end{proposition}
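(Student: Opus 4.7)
The plan is to reduce the statement to the known sample complexity bound for Max-SW on compactly supported measures, exploiting the deterministic inequality chain already established in Proposition~\ref{prop:connection}(i). This bypasses the technical difficulty of directly analyzing the random-path slicing distribution $\sigma_{RP}(\theta;\mu_n,\mu,\sigma_\kappa)$, whose dependence on $\mu_n$ would otherwise couple the randomness of the slicing direction with the randomness of the sample.

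First I would instantiate Proposition~\ref{prop:connection}(i) at the pair $(\mu_n,\mu)$. For every realization of $X_1,\ldots,X_n$, it yields the pointwise chain
\begin{equation*}
    \text{RPSW}_p(\mu_n,\mu;\sigma_\kappa) \leq \text{IWRPSW}_p(\mu_n,\mu;\sigma_\kappa,L,f) \leq \text{Max-SW}_p(\mu_n,\mu).
\end{equation*}
Taking expectation with respect to $X_1,\ldots,X_n$ preserves both inequalities; the left one is exactly the first inequality claimed in the proposition, while the right one reduces the remaining task to bounding $\mathbb{E}[\text{Max-SW}_p(\mu_n,\mu)]$.

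Second, I would invoke the sample complexity result for Max-SW on compactly supported measures from \cite{nietert2022statistical}, which states that there exists a universal constant $C>0$ (depending only on the diameter of the support) such that
\begin{equation*}
    \mathbb{E}[\text{Max-SW}_p(\mu_n,\mu)] \leq C\sqrt{\frac{(d+1)\log(n+1)}{n}}.
\end{equation*}
The underlying argument controls $\sup_{\theta \in \mathbb{S}^{d-1}} W_p(\theta\sharp\mu_n,\theta\sharp\mu)$ through a uniform bound on the Kolmogorov-Smirnov-type deviations of the one-dimensional projected CDFs: the $\sqrt{(d+1)/n}$ scaling comes from the VC dimension $O(d)$ of the class of half-spaces indexed by $\theta \in \mathbb{S}^{d-1}$, and the logarithmic factor arises from the standard Dudley/chaining step after converting the CDF deviation into a $W_p$ deviation using the boundedness of the support. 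Chaining this bound with the expectation of the previous display completes the argument.

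The main obstacle, were one to prove everything from scratch, would be the Max-SW sample complexity itself, as it requires nontrivial empirical-process machinery uniform over spherical projections. Because \cite{nietert2022statistical} already provides this bound for compactly supported measures, the proof reduces to combining Proposition~\ref{prop:connection}(i) with that reference; no new analysis of the random-path slicing distribution or of the importance weights is needed for the statistical rate.
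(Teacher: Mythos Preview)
Your proposal is correct and follows essentially the same route as the paper: instantiate the deterministic chain from Proposition~\ref{prop:connection}(i) at $(\mu_n,\mu)$, take expectations, and then bound $\mathbb{E}[\text{Max-SW}_p(\mu_n,\mu)]$ by the known $\sqrt{(d+1)\log(n+1)/n}$ rate. The only difference is cosmetic: the paper reproduces the Max-SW sample-complexity argument in detail (bounding the projected Wasserstein by $\text{diam}(\mathcal{X})\cdot\sup_{A\in\mathcal{A}}|\mu_n(A)-\mu(A)|$ over half-spaces, then applying the VC inequality and Sauer's lemma with $VC(\mathcal{A})=d+1$, following~\cite{nguyen2023energy}), whereas you invoke it as a black box from~\cite{nietert2022statistical}.
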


The proof of Proposition~\ref{prop:sample_complexity} is given in Appendix~\ref{subsec:proof:prop:sample_complexity}. From the proposition, we can see that the approximation rate of using an empirical probability measure to a population measure of the proposed RPSW and IWRPSW is at the order of $n^{-1/2}$. Therefore, it is possible to say that the proposed RPSW and IWRPSW do not suffer from the curse of dimensionality as other SW variants. Due to the missing proof of triangle inequality of RPSW and IWRPSW, it is non-trivial to extend from the one-sided sample complexity to the two-sided sample complexity as in the conventional SW~\cite{nadjahi2019asymptotic}.

\textit{\textbf{Computational Properties.}} We now present how to compute RPSW and IWRPSW in practice.

\textbf{Monte Carlo estimation.} The expectations in RPSW is intractable (Definition~\ref{def:rpsw}), hence, Monte Carlo samples are used to estimate RPSW. In particular, we sample $\theta_1,\ldots,\theta_L \sim \sigma_{RP}(\theta;\mu,\nu,\kappa)$ as described in Section~\ref{subsec:rpd}, then we form the Monte Carlo estimate of RPSW:
\begin{align}
    \label{eq:MC_RPSW}
    \widehat{\text{RPSW}}_p^p(\mu,\nu;\sigma_\kappa,L)  =\frac{1}{L} \sum_{l=1}^L W_p^p(\theta_l \sharp \mu,\theta_l \sharp \nu).
\end{align}
We refer to the reader to Algorithm~\ref{alg:RPSW}-~\ref{alg:IWRPSW} in Appendix~\ref{sec:algorithm} for the detailed computation of RPSW and IWRPSW. We then discuss the approximation error of the estimator.
\begin{proposition}
    \label{proposition:MCerror}
    For any $p\geq 1$, $0<\kappa<\infty$ , dimension $d \geq 1$, and $\mu,\nu \in \mathcal{P}_p(\mathbb{R}^d)$, we have:
    \begin{align*}
        &\mathbb{E} | \widehat{\text{RPSW}}_{p}^p(\mu,\nu;\sigma_\kappa) - \text{RPSW}_{p}^p (\mu,\nu;\sigma_\kappa)|  \\&\quad \leq \frac{1}{\sqrt{L}} Var_{\theta \sim \sigma_{RP}(\theta;\mu,\nu,\sigma_\kappa)}\left[ W_p^p \left(\theta \sharp \mu, \theta \sharp \nu \right)\right]^{\frac{1}{2}}.
    \end{align*}
\end{proposition}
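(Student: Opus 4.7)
The proof is a standard Monte Carlo variance bound, so I will only sketch the three one-line steps and flag the one place where one must be slightly careful.

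The plan is to first observe that the estimator is unbiased. Indeed, since $\theta_1,\ldots,\theta_L$ are i.i.d.\ from $\sigma_{RP}(\theta;\mu,\nu,\sigma_\kappa)$, linearity of expectation together with Definition~\ref{def:rpsw} gives $\mathbb{E}\bigl[\widehat{\text{RPSW}}_p^p(\mu,\nu;\sigma_\kappa,L)\bigr] = \text{RPSW}_p^p(\mu,\nu;\sigma_\kappa)$. Thus the absolute deviation on the left-hand side is exactly the $L^1$ deviation of $\widehat{\text{RPSW}}_p^p$ from its mean.

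Next, I apply Jensen's inequality to the concave map $t \mapsto \sqrt{t}$, which yields
\begin{align*}
\mathbb{E}\bigl|\widehat{\text{RPSW}}_{p}^p - \text{RPSW}_{p}^p\bigr| \leq \sqrt{\mathbb{E}\bigl(\widehat{\text{RPSW}}_{p}^p - \text{RPSW}_{p}^p\bigr)^2} = \sqrt{\operatorname{Var}\bigl(\widehat{\text{RPSW}}_{p}^p\bigr)}.
\end{align*}
Then I compute the variance of the sample mean of i.i.d.\ copies: writing $g(\theta) := W_p^p(\theta\sharp\mu,\theta\sharp\nu)$ and using independence,
\begin{align*}
\operatorname{Var}\bigl(\widehat{\text{RPSW}}_{p}^p\bigr) = \frac{1}{L^2} \sum_{l=1}^L \operatorname{Var}_{\theta_l \sim \sigma_{RP}}[g(\theta_l)] = \frac{1}{L}\operatorname{Var}_{\theta \sim \sigma_{RP}(\theta;\mu,\nu,\sigma_\kappa)}\bigl[W_p^p(\theta\sharp\mu,\theta\sharp\nu)\bigr].
\end{align*}
Combining the Jensen bound with this variance identity and pulling $1/\sqrt{L}$ outside the square root yields the claimed inequality.

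There is essentially no obstacle; the only subtlety is that the slicing distribution $\sigma_{RP}$ itself depends on $\mu$ and $\nu$, but since $\mu,\nu$ are fixed throughout the expectation, this dependence is harmless and the standard i.i.d.\ variance calculation applies directly. Finiteness of $\operatorname{Var}[g(\theta)]$ (needed to make the right-hand side meaningful) follows from $\mu,\nu \in \mathcal{P}_p(\mathbb{R}^d)$, since then $W_p^p(\theta\sharp\mu,\theta\sharp\nu)$ is uniformly bounded by $2^{p-1}\bigl(\int\|x\|^p d\mu + \int\|y\|^p d\nu\bigr) < \infty$, so the variance is automatically finite.
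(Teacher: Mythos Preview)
Your proof is correct and follows essentially the same route as the paper's own argument: the paper invokes H\"older's inequality (which here is the same $\mathbb{E}|X|\leq(\mathbb{E}X^2)^{1/2}$ bound you obtain via Jensen), then uses unbiasedness to identify the second moment as the variance, and finally applies the i.i.d.\ variance-of-the-mean identity. Your added remark on finiteness of the variance is a welcome detail the paper omits.
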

The proof of Proposition~\ref{proposition:MCerror} is given in Appendix~\ref{proposition:MCerror}. From the proposition, we observe that RPSW has the same error rate as SW in terms of $L$ i.e., $L^{-1/2}$.

Similarly for IWRPSW, let $H\geq 1$,
we sample $\theta_{11},\ldots,\theta_{HL} \sim \sigma_{RP}(\theta;\mu,\nu,\kappa)$, then we form the Monte Carlo estimate of IWRPSW as follows:
\begin{align}
    \label{eq:MC_IWRPSW}
    &\widehat{\text{IWRPSW}}_p^p(\mu,\nu;\sigma_\kappa,L,H) =\frac{1}{H}\sum_{h=1}^H\nonumber\\ &\left[\sum_{l=1}^L W_p^p(\theta_{hl} \sharp \mu, \theta_{hl} \sharp \nu) \frac{f(W_p^p(\theta_{hl} \sharp \mu, \theta_{hl} \sharp \nu))}{\sum_{j=1}^L f(W_p^p(\theta_{hj} \sharp \mu, \theta_{hj} \sharp \nu))} \right].
\end{align}
Since IWRPSW has $L$ random projecting directions, we approximate the expectation by $H$ sets of Monte Carlo samples. To simplify and make $L$ as the only parameter for the number of projections, we set $H=1$ in this paper. In contrast to RPSW, the error rate of IWRPSW with respect to $L$ is non-trivial due to the importance weights.

\textbf{Unbiasedness.} Since $\text{RPSW}^p_p(\mu,\nu;\sigma_\kappa) $ and $\text{IWRPSW}^p_p(\mu,\nu;\sigma_\kappa,L,f) $ can be approximated directly by Monte Carlo samples, their corresponding estimators $\widehat{\text{RPSW}}_{p}^p(\mu,\nu;\sigma_\kappa)$ and $\widehat{\text{IWRPSW}}_p^p(\mu,\nu;\sigma_\kappa,L,H) $ are unbiased estimates. 

\textbf{Computational complexities.} When $\mu$
 and $\nu$ are discrete measures that have a most $n$ supports, sampling from them only cost $\mathcal{O}(n)$ in terms of time and space. Hence, sampling $L$ random paths between $\mu$ and $\nu$ cost $\mathcal{O}(Ldn)$ in time and memory. After that, sampling from the vMF distribution and the PS distribution costs $\mathcal{O}(Ld)$ (Algorithm 1 in~\cite{de2020power}) in time and memory. Adding the complexities for computing one-dimensional Wasserstein distance, the time complexity and space complexity for RPSW is $\mathcal{O}(Ln\log n+Ldn)$ and $\mathcal{O}(Ld+Ln)$ respectively. For IWRPSW, the complexities are multiplied by $H$, however, they can be kept the same if setting $H=1$. 

\textbf{Gradient estimation.} In applications where RPSW and IWRPSW are used as a risk to estimate some parameters of interest i.e., $\mu_\phi$ with $\phi \in \Phi$, we might want to estimate the gradient of RPSW and IWRPSW with respect to $\phi$. For RPSW, we have:
\begin{align*}
    &\nabla_\phi \text{RPSW}^p_p(\mu_\phi,\nu;\sigma_\kappa) \\
    &=\nabla_\phi \mathbb{E}_{X \sim \mu_\phi ,Y\sim \nu} \mathbb{E}_{\theta \sim\sigma_\kappa(\theta;P_{\mathbb{S}^{d-1}}(X-Y)) }[W_p^p(\theta \sharp \mu_\phi, \theta \sharp \nu)].
\end{align*}
Since vMF and PS are reparameterizable~\cite{de2020power}, we can reparameterize $\sigma_{RP}(\theta;\mu,\nu,\sigma_\kappa)$ if $\mu_\phi$ is reparameterizable (e.g., $\mu_\phi:=f_\phi \sharp \varepsilon$ for $\varepsilon$ is a fixed distribution, or discussed other distributions in~\cite{kingma2013auto}). With parameterized sampling, we can sample $\theta_{1,\phi},\ldots,\theta_{L,\phi} \sim \sigma_{RP}(\theta;\mu_\phi,\nu,\sigma_\kappa)$, then form an unbiased gradient estimator as follow:
\begin{align*}
    \nabla_\phi \text{RPSW}^p_p(\mu_\phi,\nu;\sigma_\kappa)  \approx \frac{1}{L} \sum_{l=1}^L \nabla_\phi W_p^p(\theta_{l,\phi}\sharp \mu_\phi,\theta_{l,\phi}\sharp \nu).
\end{align*}
For IWRPSW, we sample $\theta_{11,\phi},\ldots,\theta_{HL,\phi} \sim \sigma_{RP}(\theta;\mu_\phi,\nu,\kappa)$, then form the following estimator:
\begin{align*}
    &\nabla_\phi\text{IWRPSW}_p^p(\mu_\phi,\nu;\sigma_\kappa,L,H) =\frac{1}{H}\sum_{h=1}^H\left[\nabla_\phi\sum_{l=1}^L  \right.\nonumber\\ &
    \left.W_p^p(\theta_{hl,\phi} \sharp \mu_\phi, \theta_{hl,\phi} \sharp \nu) \frac{f(W_p^p(\theta_{hl,\phi} \sharp \mu_\phi, \theta_{hl,\phi} \sharp \nu))}{\sum_{j=1}^L f(W_p^p(\theta_{hj,\phi} \sharp \mu_\phi, \theta_{hj,\phi} \sharp \nu))} \right].
\end{align*}

\begin{remark}
\label{remark:gradient}
    We found in later experiments that removing the dependent of $\phi$ in the slicing distribution i.e., using a dependent copy $\sigma_{RP}(\theta;\mu_{\phi'},\nu,\kappa)$ with $\phi'=\phi$ in value, leads to a more stable estimator in practice. It is worth noting that using a copy of both $\mu$ and $\nu$ (i.e., $\mu'$ and $\nu'$) in RPSW can even lead to a variant that satisfies the triangle inequality since the slicing distribution becomes a fixed distribution in this situation. We refer to this gradient estimator as the simplified gradient estimator.
\end{remark} 
\section{Experiments}
\label{sec:experiments}
In this section, we compare the proposed RPSW and IWRPSW to the existing SW variants such as SW, Max-SW, DSW, and EBSW in gradient flow in Section~\ref{subsec:gf} and training denoising diffusion models in~\ref{subsec:ddpm}. 

\subsection{Toy Gradient Flows}
\label{subsec:gf}
 \begin{figure*}[t]
\begin{center}
  \begin{tabular}{cc}
  \widgraph{0.45\textwidth}{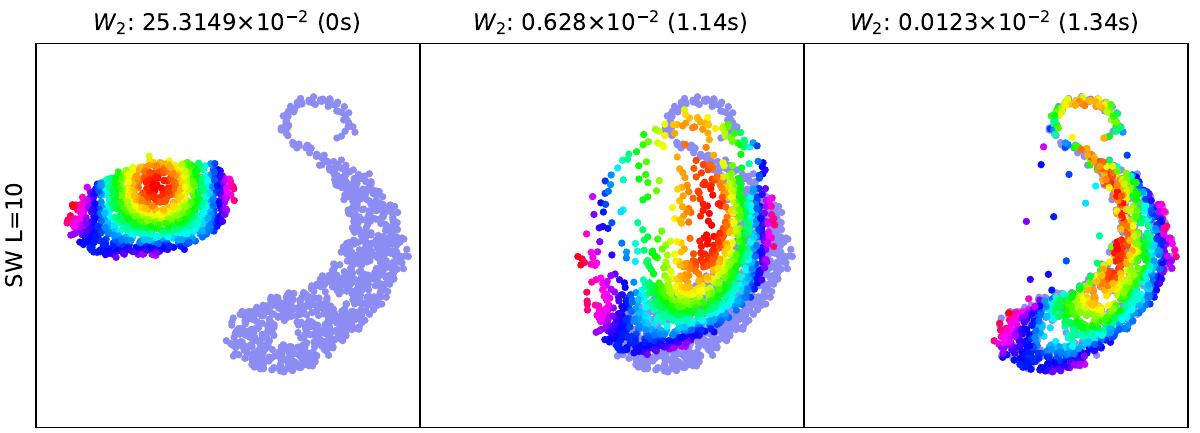}  & \widgraph{0.45\textwidth}{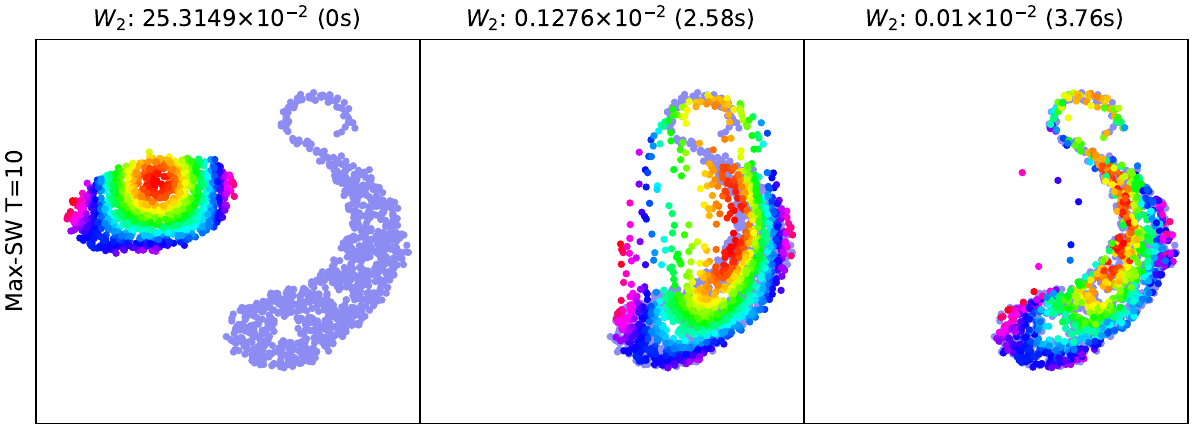} \\
  \widgraph{0.45\textwidth}{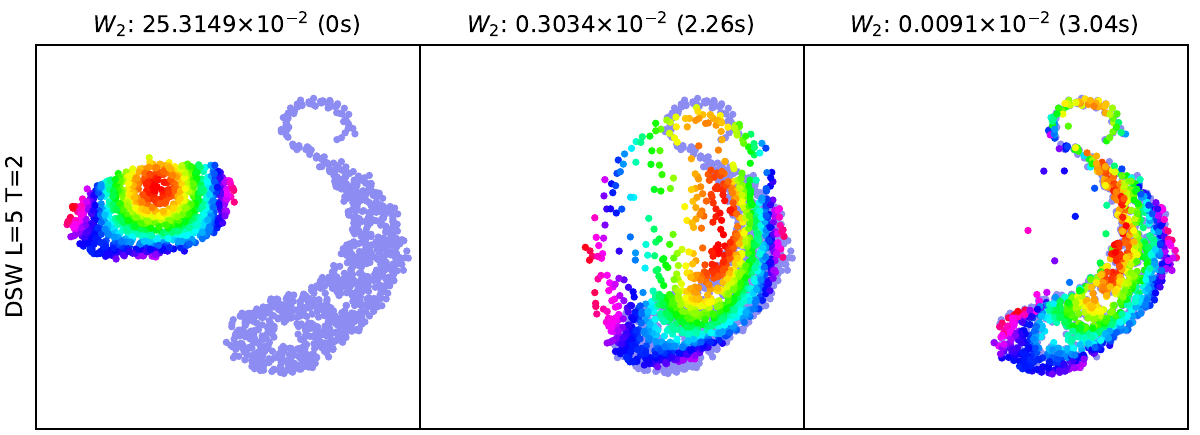}  & \widgraph{0.45\textwidth}{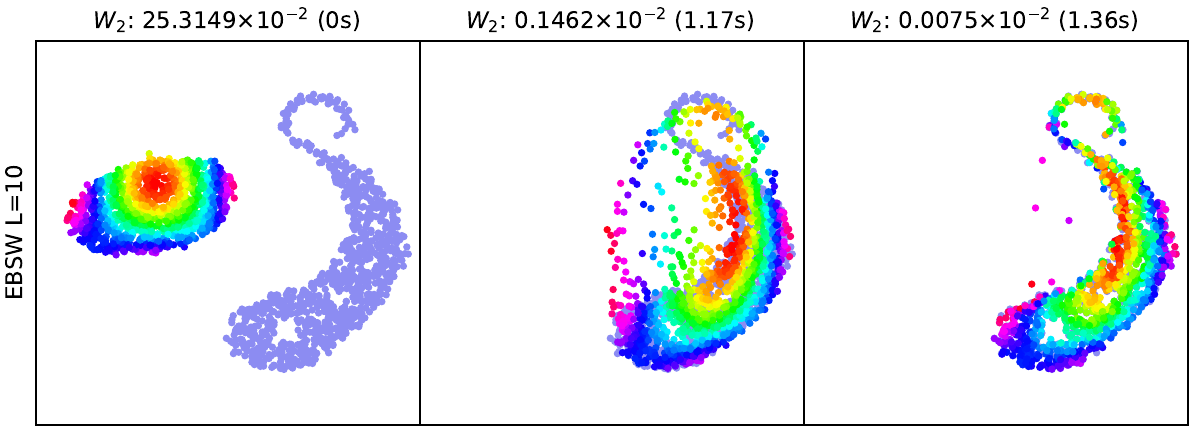}  \\
  \widgraph{0.45\textwidth}{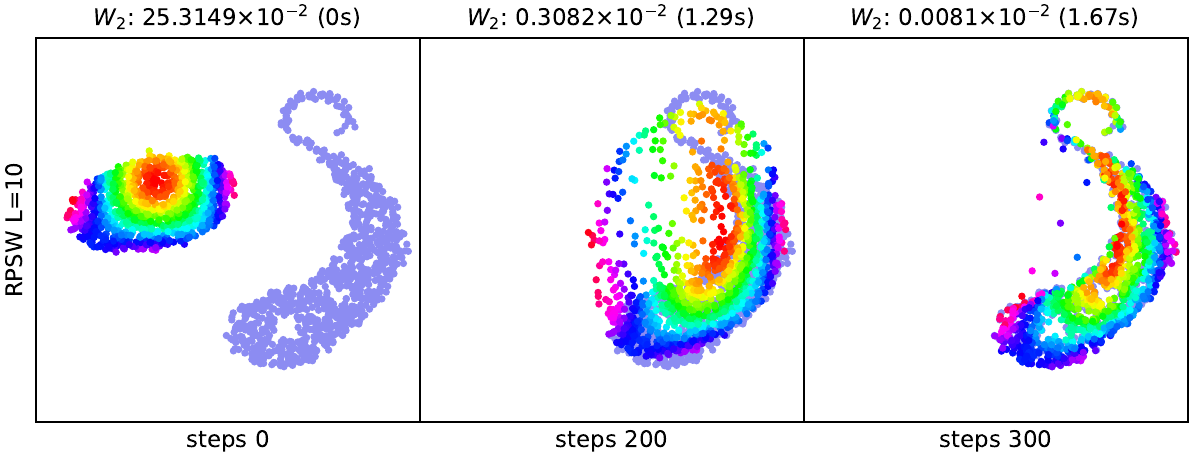}  & \widgraph{0.45\textwidth}{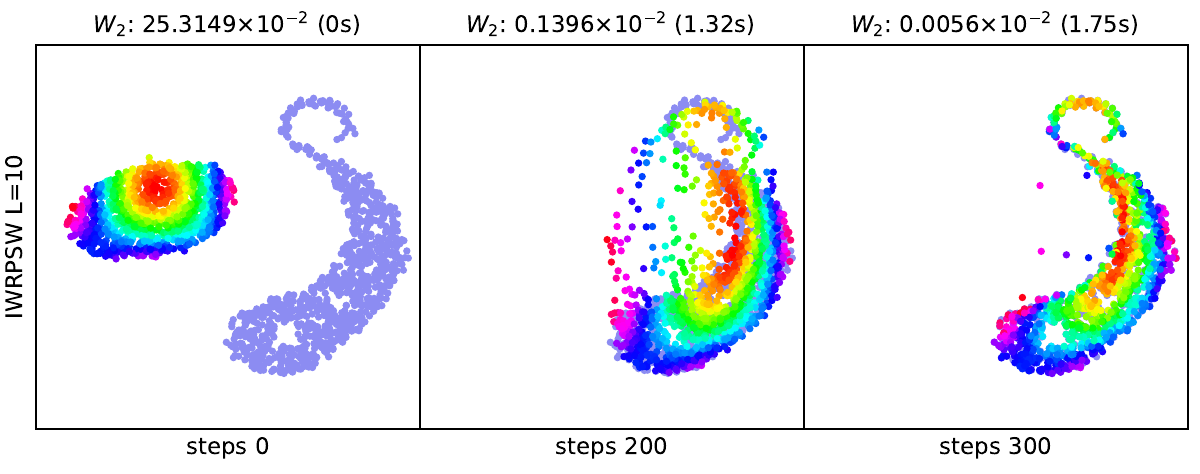} \\
  \end{tabular}
  \end{center}
  \vskip -0.2in
  \caption{
  {Results for gradient flows that are from the empirical distribution over the color points to the empirical distribution over S-shape points produced by different SW variants. 
  The corresponding Wasserstein-2 distance between the empirical distribution at the current step and the S-shape distribution and the computational time (in second) to reach the step is reported at the top of the figure.
}
} 
  \label{fig:gf_main}
  \vskip -0.15in
\end{figure*}
The gradient flow models a dynamic distribution $\mu(t)$ flowing with time $t$ along the gradient flow of a loss functional $\mu(t) \to f(\mu(t))$. In this experiment, we set $f(\mu(t))=\mathcal{D}(\mu(t),\nu)$ with $\nu$ as the target distribution and  $\mathcal{D}$ is a given SW variant between probability measures. The flow will drive the source distribution  towards $\nu$~\cite{feydy2019interpolating,santambrogio2015optimal}. In this setup, we consider the discrete setting i.e., $\nu =  \frac{1}{n}\sum_{i=1}^n \delta_{Y_i}$  and $\mu(t) = \frac{1}{n} \sum_{i=1}^n \delta_{X_i(t)}$.  

\textbf{Setting.} To solve the flow,  we integrate the ordinary differential equation  $\dot{X}(t)=- n \nabla_{X(t)} \left[\mathcal{D}\left(\frac{1}{n } \sum_{i=1}^n \delta_{X_i(t)}, \nu \right)\right] $ with the Euler scheme with $300$ timesteps and the step size is $10^{-4}$. We set the total number of projections for SW variants to 10. For DSW, RPSW, and IWRPSW, we set the concentration parameter $\kappa$ of the PS distribution as a dynamic quantity i.e., $\kappa(t)=(\kappa_0-1) \left(\frac{N-t-1}{N-1}\right)^{10} +1$ with $N=300$ and $\kappa_0 \in \{100,50\}$. The reason for the dynamic choice of $\kappa$ is to stabilize the convergence of the flow since when two distributions are relatively closed, the uniform distribution is nearly optimal.

\textbf{Results.} We show both the qualitative visualization and quantitative comparison (in Wasserstein-2 distance~\cite{flamary2021pot}) in Figure~\ref{fig:gf_main}. From the figure, we observe that RPSW leads to a faster and better convergence than SW and IWRPSW leads to a faster and better convergence than EBSW. We also observe that optimization-based variants such as Max-SW and DSW are slower than sampling-based variants like SW, EBSW, RPSW, and IWRPSW. Moreover, the convergence of Max-SW and DSW is not as good as EBSW, RPSW, and IWRPSW. Overall, the experiment has shown the benefit of the random-path slicing distribution which improves the performance while having a fast computation. In the experiment, we use the simplified gradient estimator for RPSW and IWRPSW. We refer the reader to Figure~\ref{fig:gf_grad} in Appendix~\ref{sec:add_exps} for the result of the original gradient estimator. Despite the fact that such estimators can lead to faster convergence, they seem to be worse in remaining the original topology of the source distribution.  

\begin{table*}[!t]
    \centering
    \caption{\footnotesize{Wasserstein-2 distance and computational times across timesteps in gradient flow on MNIST.}}
    \scalebox{0.75}{
    \begin{tabular}{l|cc|cc|cc|cc|cc|}
    \toprule
     \multirow{2}{*}{Distance}&\multicolumn{2}{c|}{Step 100}&\multicolumn{2}{c|}{Step 500}&\multicolumn{2}{c|}{Step  1000}&\multicolumn{2}{c|}{Step  3000 }&\multicolumn{2}{c|}{Step  5000 }\\
     \cmidrule{2-11}
     & $\text{W}_2$($\downarrow$) &$\text{Time (s)}$($\downarrow$) & $\text{W}_2$($\downarrow$) &$\text{Time (s)}$($\downarrow$)& $\text{W}_2$($\downarrow$) &$\text{Time (s)}$($\downarrow$) & $\text{W}_2$($\downarrow$) &$\text{Time (s)}$($\downarrow$) &$\text{W}_2$($\downarrow$) &$\text{Time (s)}$($\downarrow$)\\
     \midrule
    SW& 		83.86&\textbf{7.11}&	67.74&\textbf{16.76}	&49.34&\textbf{28.09}&	29.14&\textbf{67.65}&	25.17&\textbf{107.83}\\
    Max-SW&	\textbf{34.74}&128.88&	33.79 &753.47&	33.42&1435.14&	33.00&38145.11	&32.37&6436.89\\
    DSW & 	50.26&136.14&	48.22 &462.57	&40.71 &764.98	&29.96&202.79	&27.49&291.20\\
    EBSW&80.29&7.25	&67.13&16.99	&48.69&28.51&	29.03&68.53	&25.11&108.65 \\
    RPSW &46.54&7.58&	28.97&18.13	&27.57&30.61	&23.83&74.71&	21.86&118.83\\
    IWRPSW & 44.28&7.59&	\textbf{28.75}&18.20	&2\textbf{7.49}&30.66&	\textbf{23.82}&77.71&	\textbf{21.81}&120.05\\
   % SIR-EBSW-e (ours) & $2.00\pm 0.03$ & $9.72\pm 0.04$ \\
   %  IMH-EBSW-e (ours) &$\mathbf{1.99\pm 0.05}$ & $9.72\pm 0.10$ \\
    \bottomrule
    \end{tabular}
}
    \label{tab:MNIST}
    % \vskip -0.1in
\end{table*}

 \begin{figure*}[!t]
\begin{center}
  \begin{tabular}{cccc}
  \widgraph{0.2\textwidth}{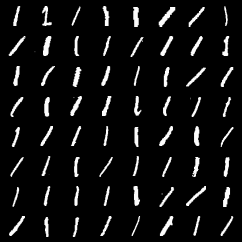}  & \widgraph{0.2\textwidth}{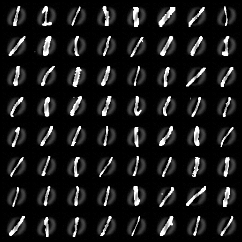} & \widgraph{0.2\textwidth}{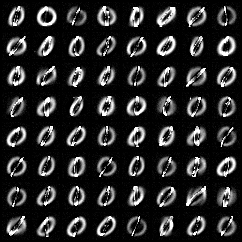}  & \widgraph{0.19\textwidth}{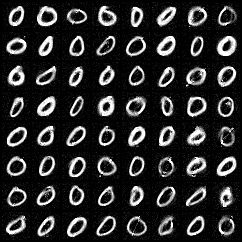}\\
SW  (Step 0) & SW (Step 100) & SW (Step 1000) & SW (Step 5000)\\

% \widgraph{0.19\textwidth}{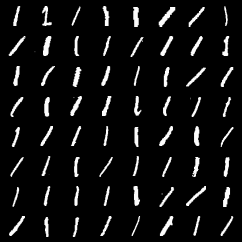}  & \widgraph{0.19\textwidth}{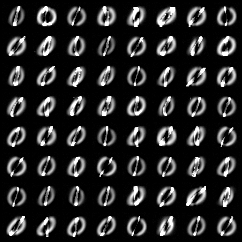} & \widgraph{0.19\textwidth}{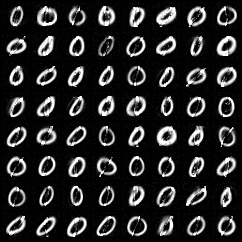}  & \widgraph{0.19\textwidth}{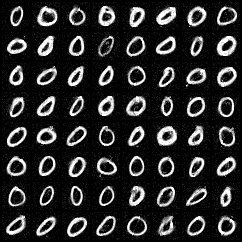}\\
% RPSW (Step 0) & RPSW (Step 100) & RPSW (Step 1000) & RPSW (Step 5000)\\

\widgraph{0.19\textwidth}{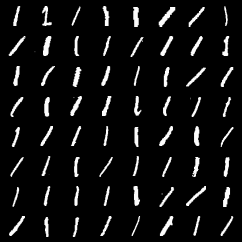}  & \widgraph{0.19\textwidth}{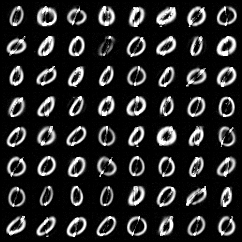} & \widgraph{0.19\textwidth}{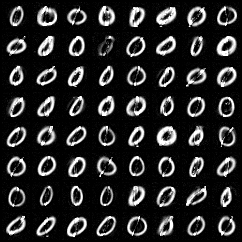}  & \widgraph{0.19\textwidth}{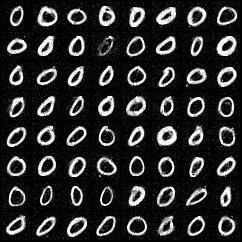}\\
IWRPSW (Step 0) & IWRPSW  (Step 100) & IWRPSW  (Step 1000) & IWRPSW (Step 5000)\\

  \end{tabular}
  \end{center}
  \vskip -0.1in
  \caption{Gradient flows from MNIST digit 1 to MNIST digit 0.
  {.
}
} 
  \label{fig:gf_mnist_1000_main}
  \vskip -0.2in
\end{figure*}

\subsection{Gradient Flows on Images}
\label{subsec:gf_MNIST}
\textbf{Setting.} Utilizing MNIST dataset~\cite{lecun1998gradient}, we select images of digit 1 to construct the source distribution and images of digit 0 to construct the target distribution. Following the previous section, we use the Euler discretization scheme with step size 1 and $N=5000$ iterations. We compare SW variants i..e, SW, EBSW, RPSW, and IWRPSW with the number of projections $L=1000$, Max-SW with $T=1000$, DSW with $L=100, T=10$. For DSW, RPSW, and IWRPSW, we set $\kappa(t)=(\kappa_0-1) \left(\frac{N-t-0.001}{N-1}\right)^{10} +0.001$ with $N=5000$ and $\kappa_0 \in {1000}$ For evaluation, we use the Wasserstein-2 distance.

\textbf{Results.} We report the Wasserstein-2 distance across timesteps and the computational time in Table~\ref{tab:MNIST}. Moreover, we visualize the flows in Figure~\ref{fig:gf_mnist_1000_main} and Figure~\ref{fig:gf_mnist_1000} in Appendix~\ref{sec:add_exps}.  We see that RPSW and IWRPSW help to reduce the Wasserstein-2 very fast compared to  other optimization-free variants i.e., SW and EBSW. Moreover,  RPSW and IWRPSW have significantly lower computation than DSW and Max-SW.  It is worth noting that the quality of the gradient flow can be improved using convolution slicing operator as in~\cite{nguyen2022revisiting,du2023nonparametric}.

\subsection{Denoising Diffusion Models}
\label{subsec:ddpm}

Denoising diffusion model~\cite{ho2020denoising,sohl2015deep} defines a forward process that gradually adds noise to the data  $x_0\sim q(x_0)$. The process has $T>0$ steps and the noise is  Gaussian. 
\begin{align*}
    q(x_{1:T}|x_0) = \prod_{t\geq 1}q(x_t|x_{t-1}),
\end{align*}
where $q(x_t|x_{t-1}) = \mathcal{N}(x_t;\sqrt{1-\beta_t}x_{t-1},\beta_t I)$ with the pre-defined variance schedule $\beta_t$. Training denoising diffusion model is to estimate some parameters $\phi$ of a reverse denoising process defined by:
\begin{align*}
    p_\phi(x_{0:T}) =p(x_T) \prod_{t\geq 1} p_\phi(x_{t-1}|x_t),
\end{align*}
where $p_\phi(x_{t-1}|x_t) =\mathcal{N}(x_{t-1};\mu_\phi(x_t,t),\sigma_t^2I)$. The conventional training uses the maximum likelihood approach by  maximizing
the evidence lower bound $\mathcal{L} \leq p_\phi(x_0) = \int p_\phi(x_{0:T}) dx_{1:T}$, which can be rewritten as:
\begin{align*}
    \mathcal{L} = - \sum_{t \geq 1} \mathbb{E}_{q(x_t)} [KL(q(x_{t-1}|x_t)|| p_\phi(x_{t-1}|x_t))]+C,
\end{align*}
where $KL$ denotes the Kullback-Leibler divergence, and $C$ is a known constant.

 \begin{figure*}[!t]
\begin{center}
  \begin{tabular}{ccc}
  \widgraph{0.3\textwidth}{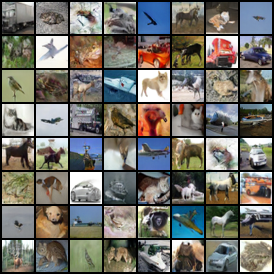}  &\widgraph{0.3\textwidth}{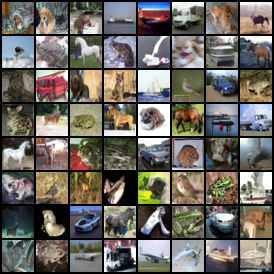}&\widgraph{0.3\textwidth}{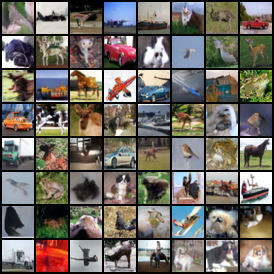} 
  \\
  DDGAN&RPSW-DD&IWRPSW-DD
  \end{tabular}
  \end{center}
  \vskip -0.15in
  \caption{
  {Random generated images on CIFAR10 from DDGAN, RPSW-DD, and IWRPSW-DD.
}
} 
  \label{fig:cifar10}
  \vskip -0.2in
\end{figure*}
\textbf{Implicit denoising model.} To reduce the number of steps $T$ for faster generation,  denoising diffusion GANs~\cite{xiao2021tackling} proposes to use the implicit denoising model $p_\phi(x_{t-1}|x_t) = \int p(\epsilon) q(x_{t-1}|x_t,x_0 = G_\phi(x_t,\epsilon,t))$  where $\epsilon \sim \mathcal{N}(0,I)$. After that,~\citet{xiao2021tackling} use adversarial  training to estimate the parameters i.e,
\begin{align*}
    \min_{\phi} \sum_{t \geq 1} \mathbb{E}_{q(x_t)} [D_{adv}(q(x_{t-1}|x_t)|| p_\phi(x_{t-1}|x_t))],
\end{align*}
where $D_{adv}$ is the GAN objective or the Jensen Shannon divergence~\cite{goodfellow2014generative}.

\textbf{Augmented Generalized Mini-batch Energy distances.} In this paper, we replace $D_{adv}$ by the generalized mini-batch Energy distance~\cite{salimans2018improving}. In particular, for two measures $\mu$ and $\nu$, and mini-batch size $m\geq 1$, we have the generalized Mini-batch Energy distance with SW kernel:
\begin{align*}
    &GME_{\mathcal{D}}^2(\mu,\nu) = 2\mathbb{E}[\mathcal{D}(P_{X},P_{Y}) ] \\&\quad- \mathbb{E}[\mathcal{D}(P_{X},P_{X^{'}}) -\mathbb{E}[\mathcal{D}(P_{Y},P_{Y^{'}})],
\end{align*}
where $X,X' \overset{i.i.d}{\sim} \mu^{\otimes m}$ and $X,X' \overset{i.i.d}{\sim} \nu^{\otimes m}$, $P_X = \frac{1}{m}\sum_{i=1}^m \delta_{x_i}$ with $X=(x_1,\ldots,x_m)$, and $\mathcal{D}$ is a SW variant. In the denoising diffusion case, $\mu$ is $q(x_{t-1}|x_t)$ and $\nu$ is $p_\phi(x_{t-1}|x_t))$ for $t=1,\ldots,T$. Although the generalized Mini-batch Energy distance can guarantee statistical convergence, it is known to be not good in practical training due to the weak training signal (lack of non-linearity which is essential for images)~\cite{salimans2018improving}. To address the issue, we propose the augmented generalized Mini-batch Energy distance:
$
    AGME_{\mathcal{D}}^2(\mu,\nu;g) = GME_{\mathcal{D}}^2(\bar{\mu},\bar{\nu}),
$
where $\bar{\mu}=f\sharp \mu$ and $\bar{\nu}=f\sharp \nu$ with $f(x) = (x,g(x))$ for $g:\mathbb{R}^d \to \mathbb{R}$ is a non-linear function. For any $g$, $AGME_{\mathcal{D}}^2(\mu,\nu;g)$ is still a valid distance since $f$ is injective. In the diffusion model setting, we train $g_\gamma$ (parameterized by $\gamma$) as a time-conditional discriminator as in~\citep[Equation (5)]{xiao2021tackling}. We refer the reader to Algorithm~\ref{alg:DD} in Appendix~\ref{sec:algorithm} for a detailed algorithm, and to Appendix~\ref{sec:add_exps} for more detailed experimental settings.

\begin{table}[!t]
    \centering
    \caption{\footnotesize{Results for unconditional generation on CIFAR-10.}}
    \vskip 0.05in
    \scalebox{0.8}{
    \begin{tabular}{lccc}
    \toprule
    Model & FID$\downarrow$ &NFE$\downarrow$& G-Time (s $\downarrow$)
    \\
    \midrule
    DDPM~\cite{ho2020denoising} &3.17&1000&80.5\\
    Score SDE (VE)~\cite{song2020score}& 2.20&2000&423.2\\
    Score SDE (VP)~\cite{song2020score}& 2.41&2000&421.5\\
        DDIM~\cite{song2020denoising}&4.67&50&4.01\\
    Prob-Flow (VP)~\cite{song2020score}&3.08&140&50.9\\
    LSGM~\cite{vahdat2021score}&2.10&147&44.5 \\
    \midrule
    DDGAN~\cite{xiao2021tackling} &3.64& 4&0.21 \\
    SW-DD (ours) & 2.90 &4&0.21 \\
    Max-SW-DD (ours) & 2.99 &4&0.21\\
    DSW-DD (ours) & 2.88&4&0.21\\
    EBSW-DD (ours) & 2.87&4&0.21\\
    RPSW-DD (ours) &2.82&4&0.21\\
    IWRPSW-DD (ours) &\textbf{2.70}&4&0.21\\
    \bottomrule
    \end{tabular}
    }
    \label{tab:fid}
    \vskip -0.2in
\end{table}

\textbf{Results.} We follow the setting in~\cite{xiao2021tackling} for diffusion models on CIFAR10~\cite{krizhevsky2009learning} with $N=1800$ epochs. We set $L=10^4$ for SW, DSW, EBSW, RPSW, and IWRPSW. We set $T \in \{2,5,10\}$ for DSW and $T \in \{100,1000\}$ for Max-SW. It is worthing noting that the training time of Max-SW and DSW is more than two times the time for SW, EBSW, RPSW, and IWRPSW, hence, we cannot increase the value of $T$ further.  We adjust concentration parameter $\kappa$ of the PS distribution for each epoch $t$ as $\kappa(t)=(\kappa_0-1) \left(\frac{N-t-1}{N-1}\right)^{10} +1$ with $N=1800$ and $\kappa_0 \in \{100,50\}$. We report the FID scores~\cite{heusel2017gans}, the number of function evaluations, and the generation time in Table~\ref{tab:fid}. Using an implicit denoising model with only $4$ time steps, our SW diffusion models have a much faster generation time compared to existing diffusion models such as DDPM, Score SDE, DDIM, Probability Flow, and LSGM. Compared to DDGAN with the same $4$ time steps, our SW diffusion models lead to a lower FID score and the same generation time. Among SW variants, the proposed RPSW and IWRPSW achieve the lowest scores of 2.82 and 2.70 in turn. We show some random generated images from DDGAN, RPSW-DD, and IWRPSW-DD in Figure~\ref{fig:cifar10} as qualitative results.
Overall, AGME distance with RPSW and IWRPSW kernels could be a potential effective loss for improving training implicit diffusion models\footnote{Code for this paper is published at  \url{https://github.com/khainb/RPSW}.}.
\section{Conclusion}
\label{sec:conclusion}
We have presented the new construction of discriminative projecting directions for SW i.e., the random-path projecting direction. From that, we derive the random-path slicing distribution and two optimization-free variants of SW, named random-path sliced Wasserstein (RPSW) and importance weighted random-path sliced Wasserstein (IWRPSW). We then discuss the topological properties, statistical properties, and computational properties of the proposed RPSW and IWRPSW. Finally, we show the favorable performance of RPSW and IWRPSW in gradient flow and training denoising diffusion models. In the future, we will extend the random-path directions to probability measures that have supported on the hyper-sphere~\cite{bonet2022spherical}, and hyperbolic manifolds~\cite{bonet2023hyperbolic}. In such cases, the random path will not be straight lines but curves.

\clearpage

\section*{Impact Statements}
The paper proposes a new variant of sliced Wasserstein to compare two probability measures. Given the numerous applications in machine learning, such as generative modeling, clustering, classification, domain adaptation, and more, the proposed random-path variants of sliced Wasserstein distance could enhance the performance of downstream applications in both quality and computation, as demonstrated in the paper.

\bibliography{example_paper}
\bibliographystyle{icml2024}
\clearpage

%%%%%%%%%%%%%%%%%%%%%%%%%%%%%%%%%%%%%%%%%%%%%%%%%%%%%%%%%%%%%%%%%%%%%%%%%%%%%%%
%%%%%%%%%%%%%%%%%%%%%%%%%%%%%%%%%%%%%%%%%%%%%%%%%%%%%%%%%%%%%%%%%%%%%%%%%%%%%%%
% APPENDIX
%%%%%%%%%%%%%%%%%%%%%%%%%%%%%%%%%%%%%%%%%%%%%%%%%%%%%%%%%%%%%%%%%%%%%%%%%%%%%%%
%%%%%%%%%%%%%%%%%%%%%%%%%%%%%%%%%%%%%%%%%%%%%%%%%%%%%%%%%%%%%%%%%%%%%%%%%%%%%%%
\newpage
\appendix
\onecolumn

\begin{center}
{\bf{\Large{Supplement to ``Sliced Wasserstein with Random-Path Projecting Projections"}}}
\end{center}
First, we present skipped proofs in the main text in Appendix~\ref{sec:proof}. We then discuss some related works in~\ref{sec:relatedwork}. After that, we provide some additional experimental results on gradient estimators in Appendix~\ref{sec:add_exps}.  Finally, we report the computational infrastructure in Appendix~\ref{sec:infra}.
\section{Proofs}
\label{sec:proof}

\subsection{Proof of Theorem~\ref{theo:metricity}}
\label{subsec:proof:theo:metricity}

\textbf{Non-negativity.} Since the Wasserstein distance is non-negative, we have $W_p(\theta \sharp \mu,\theta \sharp \nu)\geq 0$ for all $\theta \in \mathbb{S}^{d-1}$. Therefore, $\mathbb{E}_{\theta \sim \sigma_{RP}(\theta;\mu,\nu,\sigma_\kappa)}[W_p(\theta \sharp \mu,\theta \sharp \nu)] \geq 0$ which leads to $\text{RPSW}_p(\mu,\nu;\sigma_\kappa)\geq 0$. Similarly, we have $\mathbb{E}_{\theta_1,\ldots, \theta_L\sim \sigma_{RP}(\theta;\mu,\nu,\sigma_\kappa)} \left[\sum_{l=1}^L W_p^p(\theta_l \sharp \mu, \theta_l \sharp \nu) \frac{f(W_p^p(\theta_l \sharp \mu, \theta_l \sharp \nu))}{\sum_{j=1}^L f(W_p^p(\theta_j \sharp \mu, \theta_j \sharp \nu))} \right] \geq 0 $ which implies $\text{IWRPSW}_p(\mu,\nu;\sigma_\kappa,L)\geq 0$

\textbf{Symmetry.} From the definition of RPSW from Remark~\ref{remark:RPSW}, and $f_{\sigma_\kappa(P_{\mathbb{S}^{d-1}}(x-y))} (\theta)$ as the density function of $\sigma_\kappa(P_{\mathbb{S}^{d-1}}(x-y))$ we have:
\begin{align*}
    \text{RPSW}^p_p(\mu,\nu;\sigma_\kappa) &=\mathbb{E}_{X \sim \mu,Y\sim \nu} \mathbb{E}_{\theta \sim\sigma_\kappa(\theta;P_{\mathbb{S}^{d-1}}(X-Y)) }[W_p^p(\theta \sharp \mu, \theta \sharp \nu)] \\
    &=\int_{\mathbb{R}^d} \int_{\mathbb{R}^d}  \int_{\mathbb{S}^{d-1}} W_p^p(\theta \sharp \mu, \theta \sharp \nu) f_{\sigma_\kappa(P_{\mathbb{S}^{d-1}}(x-y))} (\theta)  d\theta d \mu(x) d \nu(y) \\
    &=\int_{\mathbb{R}^d} \int_{\mathbb{R}^d}  \int_{\mathbb{S}^{d-1}} W_p^p(-\theta \sharp \mu, -\theta \sharp \nu) f_{\sigma_\kappa(P_{\mathbb{S}^{d-1}}(x-y))} (-\theta)  d\theta d \mu(x) d \nu(y) \\
&=\int_{\mathbb{R}^d} \int_{\mathbb{R}^d}  \int_{\mathbb{S}^{d-1}} W_p^p(\theta \sharp \mu, \theta \sharp \nu) f_{\sigma_\kappa(P_{\mathbb{S}^{d-1}}(y-x))} (\theta)  d\theta d \mu(x) d \nu(y)\\
&=\int_{\mathbb{R}^d} \int_{\mathbb{R}^d}  \int_{\mathbb{S}^{d-1}} W_p^p(\theta \sharp \nu, \theta \sharp \mu) f_{\sigma_\kappa(P_{\mathbb{S}^{d-1}}(y-x))} (\theta)  d\theta  d \nu(y)d \mu(x)\\
    &=\mathbb{E}_{Y\sim \nu,X \sim \mu} \mathbb{E}_{\theta \sim\sigma_\kappa(\theta;P_{\mathbb{S}^{d-1}}(Y-X)) }[W_p^p(\theta \sharp \nu, \theta \sharp \mu)]  = \text{RPSW}^p_p(\nu,\mu;\sigma_\kappa),
\end{align*}
where we use the reflection property of $\mathbb{S}^{d-1}$,  the property:
\begin{align*}
    W_p^p(\theta \sharp \mu,\theta \sharp \nu)&= \inf_{\pi \in \Pi(\mu,\nu)} \int |\theta^\top(x - y)|^p d\pi(x,y) \\
    &=\inf_{\pi \in \Pi(\mu,\nu)} \int |-\theta^\top(x - y)|^p d\pi(x,y)  = W_p^p(-\theta \sharp \mu,-\theta \sharp \nu),
\end{align*}
and $$f_{\sigma_\kappa(\theta;P_{\mathbb{S}^{d-1}}(x-y))}(\theta) = f_{\sigma_\kappa(\theta;P_{\mathbb{S}^{d-1}}(y-x))}(-\theta)$$ which holds for both the vMF density $f_{\sigma_\kappa(\theta;P_{\mathbb{S}^{d-1}}(x-y)}(\theta) \propto \exp\left(\kappa \frac{(x-y)}{\|x-y\|_2}^\top \theta\right)$ and the PS density $f_{\sigma_\kappa(\theta;P_{\mathbb{S}^{d-1}}(x-y)}(\theta) \propto \left(1+\frac{(x-y)}{\|x-y\|_2}^\top \theta\right)^\kappa $. Similarly, we have $
    \text{IWRPSW}^p_p(\mu,\nu;\sigma_\kappa,L,f) =\text{IWRPSW}^p_p(\nu,\mu;\sigma_\kappa,L,f).$
    
\textbf{Identity.}  We need to show that $\text{RPSW}_{p}(\mu,\nu;\sigma_k) = 0 $ if and only if $ \mu =  \nu$. For the forward direction, since $W_p(\theta\sharp \mu,\theta \sharp \nu)=0$ when $\mu=\nu$, we obtain directly $\mu = \nu$ implies $\text{RPSW}_{p}(\mu,\nu;\sigma_k) = 0 $. For the reverse direction, we use the same proof technique in~\cite{bonnotte2013unidimensional}. If $\text{RPSW}_{p}(\mu,\nu;\sigma_k) = 0$, we have $\int_{\mathbb{S}^{d-1}}\text{W}_p\left(\theta {\sharp} \mu, \theta \sharp \nu\right) \mathrm{d} \sigma_{RP}(\theta;\mu,\nu,\sigma_k)=0$. Hence, we have $W_p(\theta \sharp \mu,\theta \sharp \nu) =0 $ for $\sigma_{RP}(\theta;\mu,\nu,\sigma_k)$-almost surely $\theta \in \mathbb{S}^{d-1}$. Since $\sigma_{RP}(\theta;\mu,\nu,\sigma_k)$ is continuous due to the continuity of $\sigma_k$, we have  $W_p(\theta \sharp \mu,\theta \sharp \nu) =0 $ for all $\theta \in \mathbb{S}^{d-1}$. Therefore, we obtain $\theta \sharp \mu =  \theta \sharp \nu$ for $\sigma_{\mu,\nu}(\theta;f,p)$-a.e  $\theta \in \mathbb{S}^{d-1}$ due to the identity of Wasserstein distance. For any $t \in \mathbb{R}$ and $\theta \in \mathbb{S}^{d-1}$, we have:
    \begin{align*}
       \mathcal{F}[\mu](t\theta) &= \int_{\mathbb{R}^{d}} e^{-it\langle \theta,x \rangle} d\mu(x)=  \int_{\mathbb{R}}e^{-itz} d \theta \sharp \mu(z) = \mathcal{F}[\theta \sharp \mu](t) \\
       &=\mathcal{F}[\theta \sharp \nu](t) =\int_{\mathbb{R}}e^{-itz} d \theta \sharp \nu(z) =\int_{\mathbb{R}^{d}} e^{-it\langle \theta,x \rangle} d\nu(x)=\mathcal{F}[\nu](t\theta), 
    \end{align*}
    where $\mathcal{F}[\gamma](w) = \int_{\mathbb{R}^{d'}} e^{-i \langle w,x\rangle} d \gamma(x)$ denotes the Fourier transform of $\gamma \in \mathcal{P}(\mathbb{R}^{d'})$. The above transformation is also known as the slice projection theorem. By the injectivity of the Fourier transform. 

For IWRPSW,  when $\mu=\nu$, we have $W_p(\theta_l \sharp \mu,\theta_l \sharp \nu) = 0$ for any $\theta_1,\ldots,\theta_L \in \mathbb{S}^{d-1}$. Therefore, we have $\sum_{l=1}^L W_p^p(\theta_l \sharp \mu, \theta_l \sharp \nu) \frac{f(W_p^p(\theta_l \sharp \mu, \theta_l \sharp \nu))}{\sum_{j=1}^L W_p^p(\theta_j \sharp \mu, \theta_j \sharp \nu)}=0$ for any $\theta_1,\ldots,\theta_L \in \mathbb{S}^{d-1}$ which implies $\text{IWRPSW}_p(\mu,\nu;\sigma_\kappa,L)= \mathbb{E} \left[\sum_{l=1}^L W_p^p(\theta_l \sharp \mu, \theta_l \sharp \nu) \frac{f(W_p^p(\theta_l \sharp \mu, \theta_l \sharp \nu))}{\sum_{j=1}^L W_p^p(\theta_j \sharp \mu, \theta_j \sharp \nu)}\right]= 0$.  In the reverse direction, when  $\text{IWRPSW}_p(\mu,\nu;\sigma_\kappa,L)=0$, it means that 
we have $\sum_{l=1}^L W_p^p(\theta_l \sharp \mu, \theta_l \sharp \nu) \frac{f(W_p^p(\theta_l \sharp \mu, \theta_l \sharp \nu))}{\sum_{j=1}^L W_p^p(\theta_j \sharp \mu, \theta_j \sharp \nu)}=0$ 
for any $\theta_1,\ldots,\theta_L \in \mathbb{S}^{d-1}$. Since $f(W_p^p(\theta_l \sharp \mu, \theta_l \sharp \nu))>0$ for any $\theta_j$, it implies that $W_p^p(\theta_l \sharp \mu, \theta_l \sharp \nu)=0$ for all $\theta_l \in \mathbb{S}^{d-1}$. With similar arguments to the proof of RPSW, we obtain $\mu=\nu$ which completes the proof.

\textbf{Quasi-Triangle Inequality.}  Given three probability measures $\mu_1,\mu_2,\mu_3 \in \mathcal{P}_p(\mathbb{R}^d)$ we have:
\begin{align*}
    \text{RPSW}_p(\mu_1,\mu_2;\sigma_\kappa) &=\left(\mathbb{E}_{\theta \sim \sigma_{RP}(\theta;\mu_1,\mu_2,\sigma_\kappa)}[W_p^p(\theta \sharp \mu_1, \theta \sharp \mu_2)]\right)^{\frac{1}{p}} \\
    &\leq \left(\mathbb{E}_{\theta \sim \sigma_{RP}(\theta;\mu_1,\mu_2,\sigma_\kappa)}[(W_p(\theta \sharp \mu_1, \theta \sharp \mu_3)+ W_p(\theta \sharp \mu_3, \theta \sharp \mu_2))^p]\right)^{\frac{1}{p}} \\
    &\leq \left(\mathbb{E}_{\theta \sim \sigma_{RP}(\theta;\mu_1,\mu_2,\sigma_\kappa)}[W_p^p(\theta \sharp \mu_1, \theta \sharp \mu_3)]\right)^{\frac{1}{p}} +\left(\mathbb{E}_{\theta \sim \sigma_{RP}(\theta;\mu_1,\mu_2,\sigma_\kappa)}[W_p^p(\theta \sharp \mu_3, \theta \sharp \mu_2)]\right)^{\frac{1}{p}} \\
    &= \text{RPSW}_{p}(\mu_1,\mu_3;\sigma_\kappa,\mu_1,\mu_2) + \text{RPSW}_{p}(\mu_3,\mu_2;\sigma_\kappa,\mu_1,\mu_2), 
\end{align*}
where the first inequality is due to the triangle inequality of Wasserstein distance and the second inequality is due to the Minkowski inequality. We complete the proof here.
\subsection{Proof of Proposition~\ref{prop:connection}}
\label{subsec:proof:prop:connection}

(i) To prove that $\text{RPSW}_{p}(\mu,\nu;\sigma_\kappa)\leq \text{IWRPSW}_{p}(\mu,\nu;\sigma_\kappa, L)$, we introduce the following lemma which had been proved in~\cite{nguyen2023energy}. Here, we provide the proof for completeness.

\begin{lemma}
    \label{lemma:inequality} For any $L\geq 1$, $0\leq a_{1} \leq a_{2} \leq \ldots \leq a_{L}$ and $0< b_{1} \leq b_{2} \leq \ldots \leq b_{L}$, we have:
    \begin{align}
        \frac{1}{L} (\sum_{i = 1}^{L} a_{i}) (\sum_{i = 1}^{L} b_{i}) \leq \sum_{i = 1}^{L} a_{i} b_{i}. \label{eq:key_inequality_2}
    \end{align}
\end{lemma}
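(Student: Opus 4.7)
The statement is the classical Chebyshev sum inequality for similarly ordered sequences, so I would use the standard ``sum of squares'' trick. The plan is to exploit the fact that both sequences are sorted in the same order, which means $(a_i - a_j)$ and $(b_i - b_j)$ always share the same sign (or one of them is zero). Hence $(a_i - a_j)(b_i - b_j) \geq 0$ for every pair $(i,j) \in \{1,\ldots,L\}^2$.

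The key step is to evaluate the double sum $S := \sum_{i,j=1}^L (a_i - a_j)(b_i - b_j)$ in two ways. Expanding, one obtains
\begin{align*}
S &= \sum_{i,j=1}^L \bigl( a_i b_i - a_i b_j - a_j b_i + a_j b_j \bigr) \\
&= 2L \sum_{i=1}^L a_i b_i \;-\; 2 \left(\sum_{i=1}^L a_i\right)\left(\sum_{j=1}^L b_j\right),
\end{align*}
where the simplification uses the fact that $\sum_{i,j} a_i b_i = L\sum_i a_i b_i$ and $\sum_{i,j} a_i b_j = (\sum_i a_i)(\sum_j b_j)$, and symmetrically for the other two terms. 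Combining this with $S \geq 0$ and dividing by $2L$ yields exactly the desired inequality.

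There is essentially no obstacle here: the only subtlety is checking the sign of each term $(a_i - a_j)(b_i - b_j)$, which follows immediately from the monotonicity assumptions. I would state the proof in two short paragraphs: first the nonnegativity of each pairwise term, then the algebraic expansion of the double sum.
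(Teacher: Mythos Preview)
Your proof is correct and is in fact the standard proof of Chebyshev's sum inequality. The paper takes a genuinely different route: it argues by induction on $L$, using the rearrangement-type inequality $a_{L+1}b_{L+1} + a_i b_i \geq a_{L+1} b_i + a_i b_{L+1}$ (summed over $i=1,\ldots,L$) to pass from $L$ to $L+1$. Your double-sum expansion of $\sum_{i,j}(a_i-a_j)(b_i-b_j)\geq 0$ is more direct and self-contained; it avoids both the induction machinery and the appeal to a separate rearrangement lemma, and it makes the role of the ``similarly ordered'' hypothesis completely transparent in one line. The paper's inductive argument, by contrast, is longer but has the minor advantage of isolating exactly where each new pair $(a_{L+1},b_{L+1})$ enters, which could be handy if one ever wanted a weighted or recursive variant. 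For the purpose at hand, your argument is the cleaner one.
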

\begin{proof}
    We prove Lemma~\ref{lemma:inequality} via an induction argument. For $L=1$, it is clear that $a_ib_i = a_ib_i$. Now, we assume that the inequality holds for $L$ i.e., $\frac{1}{L} (\sum_{i = 1}^{L} a_{i}) (\sum_{i = 1}^{L} b_{i}) \leq \sum_{i = 1}^{L} a_{i} b_{i}$ or  $
    (\sum_{i = 1}^{L} a_{i}) (\sum_{i = 1}^{L} b_{i}) \leq L \sum_{i = 1}^{L} a_{i} b_{i}.$
Now, we want to show that the inequality holds for $L+1$ i.e., $ (\sum_{i = 1}^{L+1} a_{i}) (\sum_{i = 1}^{L} b_{i}) \leq (L+1) \sum_{i = 1}^{L+1} a_{i} b_{i}.$ First, we have:
\begin{align*}
    (\sum_{i = 1}^{L + 1} a_{i}) (\sum_{i = 1}^{L + 1} b_{i}) &= (\sum_{i = 1}^{L } a_{i}) (\sum_{i = 1}^{L} b_{i})+ (\sum_{i = 1}^{L} a_{i}) b_{L + 1} + (\sum_{i = 1}^{L} b_{i}) a_{L + 1} + a_{L + 1} b_{L + 1} \\
    &\leq L \sum_{i = 1}^{L} a_{i} b_{i} + (\sum_{i = 1}^{L} a_{i}) b_{L + 1} + (\sum_{i = 1}^{L} b_{i}) a_{L + 1} + a_{L + 1} b_{L + 1}.
\end{align*}
By rearrangement inequality, we have $a_{L + 1} b_{L + 1} + a_{i} b_{i} \geq a_{L + 1} b_{i} + b_{L + 1} a_{i}$ for all $1 \leq i \leq L$. By taking the sum of these inequalities over $i$ from $1$ to $L$, we obtain:
\begin{align*}
    (\sum_{i = 1}^{L} a_{i}) b_{L + 1} + (\sum_{i = 1}^{L} b_{i}) a_{L + 1} \leq \sum_{i = 1}^{L} a_{i} b_{i} + L a_{L + 1} b_{L + 1}. \label{eq:key_inequality_3}
\end{align*}

Therefore, we have

\begin{align*}
    (\sum_{i = 1}^{L + 1} a_{i}) (\sum_{i = 1}^{L + 1} b_{i}) &\leq L \sum_{i = 1}^{L} a_{i} b_{i} + (\sum_{i = 1}^{L} a_{i}) b_{L + 1} + (\sum_{i = 1}^{L} b_{i}) a_{L + 1} + a_{L + 1} b_{L + 1} \\
    &\leq L \sum_{i = 1}^{L} a_{i} b_{i} + \sum_{i = 1}^{L} a_{i} b_{i} + L a_{L + 1} b_{L + 1} + a_{L + 1} b_{L + 1}  \\
    &=(L + 1) (\sum_{i = 1}^{L + 1} a_{i} b_{i}),
\end{align*}
which completes the proof.
\end{proof}

From Lemma~\ref{lemma:inequality}, with $a_i = W_p^p(\theta_i\sharp \mu,\theta_j \sharp \nu)$ and $b_i = f(W_p^p(\theta_i\sharp \mu,\theta_j \sharp \nu))$, we have:
\begin{align*}
    \frac{1}{L} \sum_{i=1}^l W_p^p(\theta_i\sharp \mu,\theta_i \sharp \nu) \leq \sum_{i=1}^L W_p^p(\theta_i\sharp \mu,\theta_i \sharp \nu) \frac{f(W_p^p(\theta_i\sharp \mu,\theta_i \sharp \nu)) }{\sum_{j=1}^L f(W_p^p(\theta_j\sharp \mu,\theta_j \sharp \nu))}.
\end{align*}
Taking the expectation with respect to $\theta_1,\ldots,\theta_L \overset{i.i.d}{\sim} \sigma_{RP}(\theta;\mu,\nu,\sigma_\kappa)$, we obtain $\text{RPSW}_{p}(\mu,\nu;\sigma_\kappa)\leq \text{IWRPSW}_{p}(\mu,\nu;\sigma_\kappa, L)$.

Now to show that $\text{IWRPSW}_{p}(\mu,\nu;\sigma_\kappa, L)\leq \text{Max-SW}(\mu,\nu)$, we have  $\theta^\star =\text{argmax}_{\theta \in \mathbb{S}^{d-1}} W_p(\theta \sharp \mu,\theta \sharp \nu)$ since  $\mathbb{S}^{d-1}$ is compact and the function $\theta \to W_p(\theta \sharp \mu,\theta \sharp \nu)$ is continuous. From the definition of the IWRPSW, for any $L\geq 1, \sigma_\kappa \in \mathcal{P}(\mathbb{S}^{d-1})$ we have:
\begin{align*}
    \text{IWRPSW}_{p}(\mu,\nu;\sigma_\kappa, L) &=  \left(\mathbb{E}_{\theta_1,\ldots, \theta_L\sim \sigma_{RP}(\theta;\mu,\nu,\sigma_\kappa)}  \left[\sum_{l=1}^L W_p^p(\theta_l \sharp \mu, \theta_l \sharp \nu) \frac{f(W_p^p(\theta_l \sharp \mu, \theta_l \sharp \nu))}{\sum_{j=1}^L f(W_p^p(\theta_j \sharp \mu, \theta_j \sharp \nu))} \right]\right)^{\frac{1}{p}} \\
    & \leq \left(\mathbb{E}_{\theta \sim \sigma_{RP}(\theta;\mu,\nu,\sigma_k)} \left[ W_p^p \left(\theta^\star \sharp \mu, \theta^\star \sharp \nu \right)\right]\right)^{\frac{1}{p}} = W_p^p \left(\theta^\star \sharp \mu, \theta^\star \sharp \nu \right) = \text{Max-SW}_p(\mu,\nu).
\end{align*}
Furthermore,  by applying the Cauchy-Schwartz inequality, we have:
\begin{align*}
\text{Max-SW}_p(\mu,\nu) &= \left(\max _{\theta \in \mathbb{S}^{d-1}}\left(\inf _{\pi \in \Pi(\mu, \nu)} \int_{\mathbb{R}^d}\left|\theta^{\top} x-\theta^{\top} y\right|^p d \pi(x, y)\right) \right)^{\frac{1}{p}} \\
& \leq \left(\max _{\theta \in \mathbb{S}^{d-1}}\left(\inf _{\pi \in \Pi(\mu, \nu)} \int_{\mathbb{R}^d \times \mathbb{R}^d}\|\theta\|^p\|x-y\|^p d \pi(x, y)\right) \right)^{\frac{1}{p}}\\
&=\left(\inf _{\pi \in \Pi(\mu, \nu)} \int_{\mathbb{R}^d \times \mathbb{R}^d}\|\theta\|^p\|x-y\|^p d \pi(x, y) \right)^{\frac{1}{p}}\\
&=\left(\inf _{\pi \in \Pi(\mu, \nu)} \int_{\mathbb{R}^d \times \mathbb{R}^d}\|x-y\|^p d \pi(x, y)\right)^{\frac{1}{p}} \\
&\leq \left( \inf _{\pi \in \Pi(\mu, \nu)} \int_{\mathbb{R}^d \times \mathbb{R}^d}|x-y|^p d \pi(x, y)\right)^{\frac{1}{p}}\\
&=W_p(\mu, \nu),
\end{align*}
 we complete the proof.

(ii)  We first recall the density of The von-Mises Fisher distribution and the Power spherical distribution. In particular, we have $vMF(\theta;\epsilon,\kappa):= \frac{\kappa^{d/2-1}}{(2\pi)^{d/2} I_{d/2-1}(\kappa)} \exp(\kappa\epsilon^\top \theta)$ with $I_s$  denotes the modified Bessel function of the first kind, and the PS distribution $PS(\theta;\epsilon,\kappa)  =  \left( 2^{d+\kappa -1} \pi^{(d-1)/2} \frac{\Gamma( (d-1)/2+\kappa )}{\Gamma (d+\kappa -1)} \right)^{-1}(1+\epsilon^\top \theta)^\kappa $. We have:
\begin{align*}
    &\lim_{\kappa \to 0}\frac{\kappa^{d/2-1}}{(2\pi)^{d/2} I_{d/2-1}(\kappa)} \exp(\kappa\epsilon^\top \theta) \to C_1,\\
    &\lim_{\kappa \to 0}\left( 2^{d+\kappa -1} \pi^{(d-1)/2} \frac{\Gamma( (d-1)/2+\kappa )}{\Gamma (d+\kappa -1)} \right)^{-1}(1+\epsilon^\top \theta)^\kappa\to C_2
\end{align*}
for some constant $C_1$ and $C_2$ which do not depend on $\theta$, hence, the vMF distribution and the PS distribution converge to the uniform distribution when $\kappa \to 0$. Therefore, we have $vMF(\theta;\epsilon,\kappa) \to \mathcal{U}(\mathbb{S}^{d-1})$ and $PS(\theta;\epsilon,\kappa) \to \mathcal{U}(\mathbb{S}^{d-1})$.
Therefore, we have $\sigma_{RP}(\theta;\mu,\nu,\sigma_\kappa) \to \mathcal{U}(\mathbb{S}^{d-1})$.

Now, we need to show that $W_p^p(\theta\sharp \mu,\theta\sharp \nu)$ is bounded and continuous with respective to $\theta$. For the boundedness, we have:
\begin{align*}
    \text{W}_p^p (\theta \sharp \mu,\theta \sharp \nu) & = \inf_{\pi\in\Pi(\nu,\mu)}\int_{\mathbb R^{d}} | \theta^\top x - \theta^\top y |^p d\pi(x,y) \\
     & \leq \inf_{\pi\in\Pi(\nu,\mu)}\int_{\mathbb R^{d}} \| x - y \|^p d\pi(x,y) \\
     & = \text{W}_p^p (\mu,\nu) <\infty.
\end{align*}
For the continuity, let $(\theta_t)_{t\geq 1}$ be a sequence on $\mathbb S^{d-1}$ which converges to $\theta\in\mathbb S^{d-1}$ i.e., $\|\theta_t - \theta \| \to 0$ as $t \to \infty$, and a arbitrary measure $\mu \in \mathcal{P}_p(\mathbb{R}^d)$. Then we have:
\begin{align*}
    \text{W}_p (\theta \sharp \mu,\theta_t \sharp \mu) & = \left(\inf_{\pi\in\Pi(\mu,\mu)}\int_{\mathbb R^{d}} | \theta_t^\top x - \theta^\top y |^p d\pi(x,y)\right)^{1/p}\\
     & \leq \left(\int_{\mathbb R^d} | \theta_t^\top x - \theta^\top x |^p d\mu(x)\right)^{1/p} \\
     & \leq \left(\int_{\mathbb R^d} \| x \|^p \mu(dx)\right)^{1/p} \| \theta_t - \theta\| \to 0 \quad \text{as } t\to\infty,
\end{align*}
where $\left(\int_{\mathbb R^d} \| x \|^p \mu(dx)\right)^{1/p} < \infty$ since $\mu \in \mathcal{P}_p(\mathbb{R}^d)$, and the second inequality is due to the Cauchy-Schwartz inequality. 

Using the triangle inequality, we have:
\begin{align*}
    \left|\text{W}_p (\theta_t \sharp \mu,\theta_t \sharp \nu) - \text{W}_p (\theta \sharp \mu,\theta \sharp \nu)\right| & \leq \left|\text{W}_p (\theta_t \sharp \mu,\theta_t \sharp \nu) - \text{W}_p (\theta \sharp \mu,\theta_t \sharp \nu)\right| + \left|\text{W}_p (\theta \sharp \mu,\theta_t \sharp \nu) - \text{W}_p (\theta \sharp \mu,\theta \sharp \nu) \right| \\
     & \leq \text{W}_p (\theta \sharp \mu,\theta_t \sharp \mu) + \text{W}_p (\theta \sharp \nu,\theta_t \sharp \nu) \to 0 \quad \text{as } t\to \infty,
\end{align*}
hence, $\text{W}_p (\theta_t \sharp \mu,\theta_t \sharp \nu) \to \text{W}_p (\theta \sharp \mu,\theta \sharp \nu)$ as $t\to\infty$ which complete the proof of continuity.

From the boundedness, continuity, and  $\sigma_{RP}(\theta;\mu,\nu,\sigma_\kappa) \to \mathcal{U}(\mathbb{S}^{d-1})$, we have $$\text{RPSW}_{p}^p(\mu,\nu;\sigma_\kappa) = \mathbb{E}_{\theta \sim \sigma_{RP}(\theta;\mu,\nu,\sigma_\kappa)}[W_p(\theta \sharp \mu, \theta \sharp \nu)] \to \mathbb{E}_{\theta \sim \mathcal{U}(\mathbb{S}^{d-1})}[W_p(\theta \sharp \mu, \theta \sharp \nu)] = SW_p^p(\mu,\nu).$$ Applying the continuous mapping theorem for $x\to x^{1/p}$, we obtain $\lim_{\kappa \to 0} \text{RPSW}_{p}(\mu,\nu;\sigma_\kappa) \to SW_p(\mu,\nu)$.

(iii) Since we have proved that $W_p^p(\theta\sharp \mu,\theta\sharp \nu)$ is bounded and continuous with respective to $\theta$, we can show that  $\sum_{i=1}^L W_p^p(\theta_i\sharp \mu,\theta_i \sharp \nu) \frac{f(W_p^p(\theta_i\sharp \mu,\theta_i \sharp \nu)) }{\sum_{j=1}^L f(W_p^p(\theta_j\sharp \mu,\theta_j \sharp \nu))}$ are bounded and continuous with respect to $\theta_1,\ldots,\theta_L$. As $L \to \infty$, we have $\sum_{i=1}^L W_p^p(\theta_i\sharp \mu,\theta_i \sharp \nu) \frac{f(W_p^p(\theta_i\sharp \mu,\theta_i \sharp \nu)) }{\sum_{j=1}^L f(W_p^p(\theta_j\sharp \mu,\theta_j \sharp \nu))} \to \mathbb{E}_{\gamma \sim \sigma_{\mu,\nu,f}(\gamma)}[\gamma \sharp \mu,\gamma \sharp \nu] = \text{EBSW}_p^p(\mu,\nu;f)$. Applying the continuous mapping theorem for $x\to x^{1/p}$, we obtain $\lim_{L \to \infty} \text{IWRPSW}_{p}(\mu,\nu;\sigma_\kappa,L,f) \to \text{EBSW}_p(\mu,\nu;f)$. 

(iv)
For any $\mu$ and $\nu$, by the Holder inequality,  we have:
\begin{align*}
    \text{RPSW}_{p}(\mu,\nu;\sigma_\kappa) &=  \left(\mathbb{E}_{\theta \sim \sigma_{RP}(\theta;\mu,\nu,\sigma_\kappa)}[W_p^p(\theta \sharp \mu, \theta \sharp \nu)]\right)^{\frac{1}{p}} \\
    &\geq \mathbb{E}_{\theta \sim \sigma_{RP}(\theta;\mu,\nu,\sigma_\kappa)}[W_p(\theta \sharp \mu, \theta \sharp \nu)] \\
    &\geq \mathbb{E}_{\theta \sim \sigma_{RP}(\theta;\mu,\nu,\sigma_\kappa)}[W_1(\theta \sharp \mu, \theta \sharp \nu)] \\&=  \text{RPSW}_{1}(\mu,\nu;\sigma_\kappa),
\end{align*}
which completes the proof.

\subsection{Proof of Proposition~\ref{prop:sample_complexity}}
\label{subsec:proof:prop:sample_complexity}
The proof for this result follows from the proof of Proposition 3 in~\cite{nguyen2023energy}. We assume that $\mu$ has a compact set of support $\mathcal{X} \in \mathbb{R}^d$. 

From Proposition~\ref{prop:connection}, we have:
\begin{align*}
    \mathbb{E} [\text{RPSW}_{p} (\mu_{n},\mu;\sigma_\kappa)] \leq  \mathbb{E} [\text{IWRPSW}_{p} (\mu_{n},\mu;\sigma_\kappa,L)] \leq \mathbb{E} \left[\text{Max-SW}_p (\mu_{n},\mu) \right],
\end{align*}
for any $\sigma_\kappa \in \mathcal{P}(\mathbb{S}^{d-1})$ 
 and $L\geq 1$. Therefore, the proposition follows as long as we can demonstrate that $$\mathbb{E} [\text{Max-SW}_p (\mu_{n},\mu)] \leq C\sqrt{(d+1) \log n/n}$$ where $\mu_n = \frac{1}{n}\sum_{i=1}^n \delta_{X_i}$ with $X_1,\ldots,X_n \overset{i.i.d}{\sim} \mu$, and $C > 0$ is some universal constant and the outer expectation is taken with respect to $X_1,\ldots,X_n$. 

Using the closed-form of one-dimensional Wasserstein distance, we have:
\begin{align*}
    \text{Max-SW}_p (\mu_{n},\mu) & = \max_{\theta \in \mathbb{S}^{d-1}} \int_{0}^{1} |F_{n, \theta}^{-1}(z) - F_{\theta}^{-1}(z)|^{p} d z,
\end{align*}
where $F_{n,\theta}$ and $F_{\theta}$ as the cumulative distributions of $\theta \sharp \mu_{n}$ and $\theta\sharp \mu$. Since $W_p( (t \theta) \sharp \mu,(t \theta) \sharp \nu) = tW_p( \theta \sharp \mu, \theta \sharp \nu)$ for $t > 0$. We can rewrite Max-SW as:
\begin{align*}
    \text{Max-SW}_p^{p} (\mu_{n},\mu) &=
      \max_{\theta \in \mathbb{R}^{d}: \|\theta\| = 1} \int_{0}^{1} |F_{n, \theta}^{-1}(z) - F_{\theta}^{-1}(z)|^{p} d z \\
    & \leq \text{diam}(\mathcal{X}) \max_{x \in \mathbb{R}, \theta \in \mathbb{R}^{d}: \|\theta\| \leq 1} |F_{n, \theta}(x) - F_{\theta}(x)|^{p}\\
    &=\text{diam}(\mathcal{X}) \sup_{A \in \mathcal{A}} |\mu_{n}(A) - \mu(A)|,
\end{align*}
where $\mathcal{A}$ is the set of half-spaces $\{z \in \mathbb{R}^{d}: \theta^{\top} z \leq x\}$ for all $\theta \in \mathbb{R}^{d}$ such that $\|\theta\| \leq 1$. 
From VC inequality (Theorem 12.5 in~\cite{devroye2013probabilistic}), we have
$$
    \mathbb{P}\left(\sup_{A \in \mathcal{A}} |\mu_{n}(B) - \mu(A)| >t \right) \leq 8 S(\mathcal{A},n) e^{-nt^2 /32}.
$$ with $S(\mathcal{A},n)$ is the growth function. From the Sauer Lemma (Proposition 4.18 in ~\cite{wainwrighthigh}), the  growth function is upper bounded by $(n+1)^{VC(\mathcal{A})}$. Moreover, we can get $VC(\mathcal{A})= d+1$ from Example 4.21 in~\cite{wainwrighthigh}.

\vspace{0.5em}
\noindent
Let $8 S(\mathcal{A},n) e^{-nt^2 /32} \leq \delta$, we have $t^2 \geq \frac{32}{n} \log \left( \frac{8S(\mathcal{A},n)}{\delta}\right)$. Therefore, we obtain
\begin{align*}
    \mathbb{P}\left(\sup_{A \in \mathcal{B}} |\mu_{n}(A) - \mu(A)| \leq \sqrt{\frac{32}{n} \log \left( \frac{8S(\mathcal{A},n)}{\delta}\right)} \right) \geq 1-\delta,
\end{align*}
Using the Jensen inequality and the tail sum expectation for non-negative random variable, we have:
\begin{align*}
    &\mathbb{E}\left[\sup_{A \in \mathcal{A}} |\mu_{n}(A) - \mu(A)|\right] \\&\leq \sqrt{\mathbb{E}\left[\sup_{A \in \mathcal{A}} |\mu_{n}(A) - \mu(A)|\right]^2} = \sqrt{\int_{0}^\infty \mathbb{P}\left(\left(\sup_{A \in \mathcal{A}} |\mu_{n}(A) - \mu(A)| \right)^2>t \right)dt} \\
    &=\sqrt{\int_{0}^u \mathbb{P}\left(\left(\sup_{A \in \mathcal{A}} |\mu_{n}(A) - \mu(A)| \right)^2>t \right)dt + \int_{u}^\infty \mathbb{P}\left(\left(\sup_{A \in \mathcal{A}} |\mu_{n}(A) - \mu(A)| \right)^2>t \right)dt} \\
    &\leq \sqrt{\int_{0}^u 1dt + \int_{u}^\infty8 S(\mathcal{A},n) e^{-nt /32} dt} = \sqrt{u + 256 S(\mathcal{A},n)  \frac{e^{-nu/32}}{n}}.
\end{align*}
Since the inequality holds for any $u$, we search for the best $u$ that makes the inequality tight. Let $f(u) = u + 256 S(\mathcal{A},n)  \frac{e^{-nu/32}}{n}$, we have $f'(u) = 1+ 8S(\mathcal{A},n) e^{-nu/32}$. Setting $f'(u)=0$, we obtain the minima $u^\star = \frac{32 \log (8S(\mathcal{A},n))}{n}$. Plugging $u^\star$ in the inequality, we obtain:
\begin{align*}
    \mathbb{E}\left[\sup_{A \in \mathcal{A}} |\mu_{n}(A) - \mu(A)|\right] & \leq \sqrt{\frac{32 \log (8S(\mathcal{A},n))}{n} + 32 }\leq C \sqrt{\frac{(d+1)\log (n+1)}{n}},
\end{align*}
by using Sauer Lemma i.e., $S(\mathcal{A},n) \leq (n+1)^{VC(\mathcal{A})} \leq (n+1)^{d+1}$. Putting the above results together leads to
\begin{align*}
    \mathbb{E} [\text{Max-SW}_p (\mu_{n},\mu)] \leq C\sqrt{(d+1) \log n/n},
\end{align*}
where $C > 0$ is some universal constant. As a consequence, we obtain the conclusion of the proof.

\subsection{Proof of Proposition~\ref{proposition:MCerror}}
\label{subsec:proof:proposition:MCerror}

For any $p\geq 1$, $d \geq 1$, $\sigma_\kappa \in \mathcal{P}(\mathbb{S}^{d-1})$, and $\mu,\nu \in \mathcal{P}_p(\mathbb{R}^d)$, using the Holder’s inequality, we have:
\begin{align*}
    &  \mathbb{E} | \widehat{\text{RPSW}}_{p}^p(\mu,\nu;\sigma_\kappa) - \text{RPSW}_{p}^p (\mu,\nu;\sigma_\kappa)| \\
    &\leq \left(\mathbb{E} | \widehat{\text{RPSW}}_{p}^p(\mu,\nu;\sigma_\kappa ) - \text{RPSW}_{p}^p (\mu,\nu;\sigma_\kappa)|^2 \right)^{\frac{1}{2}} \\
    &= \left(\mathbb{E} \left( \frac{1}{L}\sum_{l=1}^L  \text{W}_p^p (\theta_{l} \sharp \mu,\theta_{l} \sharp \nu)  - \mathbb{E}_{\theta\sim \sigma_{RP}(\theta;\mu,\nu,\sigma_\kappa)}\left[  W_p^p \left(\theta \sharp \mu, \theta \sharp \nu \right)\right]\right)^2 \right)^{\frac{1}{2}} 
\end{align*}

Since $\mathbb{E}[\frac{1}{L}\sum_{l=1}^L  \text{W}_p^p (\theta_{l} \sharp \mu,\theta_{l} \sharp \nu)] = \frac{1}{L}\sum_{l=1}^L \mathbb{E}[\text{W}_p^p (\theta_{l} \sharp \mu,\theta_{l} \sharp \nu)] = \frac{1}{L}\sum_{l=1}^L \widehat{\text{RPSW}}_{p}^p(\mu,\nu;\sigma_\kappa)  = \widehat{\text{RPSW}}_{p}^p(\mu,\nu;\sigma_\kappa) $, we have:
\begin{align*}
    \mathbb{E} | \widehat{\text{RPSW}}_{p}^p(\mu,\nu;\sigma_\kappa) - \text{RPSW}_{p}^p (\mu,\nu;\sigma_\kappa)|&\leq  \left( Var_{\theta \sim \sigma_{RP}(\theta;\mu,\nu,\sigma_\kappa )}\left[ \frac{1}{L}  \sum_{l=1}^L  W_p^p \left(\theta_l \sharp \mu, \theta_l \sharp \nu \right)\right]\right)^{\frac{1}{2}} \\
    &= \frac{1}{\sqrt{L}} Var\left[ W_p^p \left(\theta \sharp \mu, \theta \sharp \nu \right)\right]^{\frac{1}{2}},
\end{align*}
since $\theta_1,\ldots,\theta_L \overset{i.i.d}{\sim} \sigma_{RP}(\theta;\mu,\nu,\sigma_\kappa )$, which completes the proof.

\section{Related Works}
\label{sec:relatedwork}

\textbf{Dynamic Optimal Transport.} Wasserstein distance can also be seen as the shortest curve between two input measures~\cite{Villani-09} which creates an optimal interpolation path between two measures~\cite{mccann1997convexity}. In this work, we work with static sliced Wasserstein distance where the random-path is only used to construct discriminative projecting directions.

\textbf{Sliced Wasserstein on Manifolds.} On manifolds where a straight line is generalized into a curse, the random-path definition should be different. For example, on the sphere, the shortest path between two points is the great circle. Therefore, it is natural to define the random-path as a random great circle with two endpoints that are randomly drawn from a coupling between two measures. This definition can lead to a natural extension for spherical Sliced Wasserstein~\cite{bonet2022spherical}. Moreover, we can also create a similar definition on hyperbolic manifolds for hyperbolic sliced Wasserstein~\cite{bonet2023hyperbolic} and on the manifold of positive definite matrix~\cite{bonet2023sliced}.

\textbf{Dependent Projecting Directions.} Markovian Sliced Wasserstein distance is introduced in~\cite{nguyen2023markovian}. It is a variant of SW that utilizes dependent projecting directions i.e., follows a Markovian structure joint distribution. The contribution of random-path slicing distribution is orthogonal to the mentioned approach since as can be used in either the prior distribution or the transition distribution in the Markovian process. In this work, we focus on slicing distribution selection variants of SW without dependency structure which can be computed very fast in parallel. 

\textbf{Properties of sliced Wasserstein losses.} In this work, we use sliced Wasserstein distance as a loss for gradient flows and generative modeling. We refer the reader to~\cite{tanguy2023convergence} for a more discussion about its properties i.e., regularity, gradient definition, and so on.

\textbf{Unbalanced Sliced Wasserstein.} Random-path projecting direction can be applied directly to SW variants with unbalanced transport e.g., sliced partial optimal transport~\cite{bonneel2019spot,bai2022sliced}, sliced unbalanced optimal transport~\cite{sejourne2023unbalanced}.

\textbf{Other potential applications.} Since RPSW and IWRPSW can be used as a replacement for SW, they can be applied in domain adaptation~\cite{lee2019sliced}, point-cloud applications~\cite{bai2023partial,nguyen2023quasi}, 3D shapes matching and deformation~\cite{le2024integrating,le2024diffeomorphic}, Bayesian inference~\cite{yi2021sliced,nadjahi2020approximate}, blue noise sampling~\cite{paulin2020sliced} and so on.

\section{Algorithms}
\label{sec:algorithm}
\textbf{Computational algorithms of RPSW and IWRPSW.} We present the pseudo-codes for computing RPSW and IWRPSW with Monte Carlo estimation in Algorithm~\ref{alg:RPSW} and Algorithm~\ref{alg:IWRPSW}.

\textbf{Algorithm for training denoising diffusion models with augmented mini-batch energy distance.} The detailed algorithm is given in Algorithm~\ref{alg:DD}.

\begin{algorithm}[!t]
\caption{Computational algorithm of RPSW}
\begin{algorithmic}
\label{alg:RPSW}
\STATE \textbf{Input:} Probability measures $\mu$ and $\nu$, $p\geq 1$, the number of projections $L$, $0<\kappa<\infty$
  
  \FOR{$l=1$ to $L$}
  \STATE Sample $X \sim \mu$, $Y \sim \nu$
  \STATE Sample $\theta_l \sim \sigma_{\kappa}\left(\theta;\frac{X-Y}{\|X-Y\|_2}\right)$
  \STATE Compute $v_l = \text{W}_p(\theta_l \sharp \mu,\theta_l \sharp  \nu)$
  % \STATE Compute $w_l = f(\text{W}_p(\theta_l \sharp \mu,\theta_l \sharp  \nu))$
  \ENDFOR
  \STATE Compute $\widehat{\text{RPSW}}_p(\mu,\nu;L,\sigma_\kappa) = \left(\frac{1}{L}\sum_{l=1}^L v_l \right)^{\frac{1}{p}}$
  
 \STATE \textbf{Return:} $\widehat{\text{RPSW}}_p(\mu,\nu;L,\sigma_\kappa)$
\end{algorithmic}
\end{algorithm}

\begin{algorithm}[!t]
\caption{Computational algorithm of the IWRPSW}
\begin{algorithmic}
\label{alg:IWRPSW}
\STATE \textbf{Input:} Probability measures $\mu$ and $\nu$, $p\geq 1$, the number of projections $L$,  $0<\kappa<\infty$, and the energy function $f$.
  
  \FOR{$l=1$ to $L$}
  \STATE Sample $X \sim \mu$, $Y \sim \nu$
  \STATE Sample $\theta_l \sim \sigma_{\kappa}\left(\theta;\frac{X-Y}{\|X-Y\|_2}\right)$
  \STATE Compute $v_l = \text{W}_p(\theta_l \sharp \mu,\theta_l \sharp  \nu)$
  \STATE Compute $w_l = f(\text{W}_p(\theta_l \sharp \mu,\theta_l \sharp  \nu))$
  \ENDFOR
  \STATE Compute $\widehat{\text{IWRPSW}}_p(\mu,\nu;\sigma_\kappa,L,f) = \left(\sum_{l=1}^L v_l \frac{w_l}{\sum_{i=1}^L w_i}\right)^{\frac{1}{p}}$
  
 \STATE \textbf{Return:} $\widehat{\text{IWRPSW}}_p(\mu,\nu;\sigma_\kappa,L,f) $
\end{algorithmic}
\end{algorithm}

\begin{algorithm}[!t]
\caption{Training denoising diffusion models with augmented mini-batch energy distance}
\begin{algorithmic}
\label{alg:DD}
\STATE \textbf{Input:} $q(x_{1:T}|x_0) = \prod_{t\geq 1}q(x_t|x_{t-1}),$ $p_\phi(x_{0:T}) =p(x_T) \prod_{t\geq 1} p_\phi(x_{t-1}|x_t),$ $p_\phi(x_{t-1}|x_t) = \int p(\epsilon) q(x_{t-1}|x_t,x_0 = G_\phi(x_t,\epsilon,t))$ with $p(\epsilon)\in \mathcal{P}(\mathbb{R}^z)$ and $G_\phi:\mathbb{R}^d\times \mathbb{R}^z \times \mathbb{R} \to \mathbb{R}^d$, $D_\gamma:\mathbb{R}^d\times \mathbb{R}^d \times \mathbb{R} \to [0,1]$, metric $\mathcal{D}$.
  
  \WHILE{$\phi$ not converged or not reaching the maximum epochs}
  \STATE Sample a mini-batch $x_{0,1},\ldots,x_{0,m} \sim q(x_0)$ with $m \geq 2$.
  \STATE Sample $t \sim \mathcal{U}({1,\ldots,T})$.
  \STATE Sample a mini-batch $x_{t-1,1} \sim q(x_{t-1}|x_{0,1}),\ldots,x_{t-1,m} \sim q(x_{t-1}|x_{0,m})$.
  \STATE Sample a mini-batch $x_{t,1} \sim q(x_{t}|x_{t-1,1}),\ldots,x_{t,m} \sim q(x_{t}|x_{t-1,m})$.
  \STATE Sample a mini-batch of noise $ \epsilon_1,\ldots,\epsilon_m \sim p(\epsilon)$.
  \STATE Set $y_{0,1},\ldots,y_{0,m}=G_\phi(x_{t,1},\epsilon_1,t),\ldots, G_\phi(x_{t,m},\epsilon_m,t)$.
  \STATE Sample a mini-batch $y_{t-1,1} \sim q(x_{t-1}|x_{t,1},y_{0,1}), \ldots, y_{t-1,m} \sim q(x_{t-1}|x_{t,m},y_{0,m})$.
  \STATE Update $\gamma$ with $\nabla_\gamma \left(\frac{1}{m}\sum_{i=1}^m -\log (D_\gamma(x_{t-1,i},x_{t,i},t)) \right)$.
  \STATE Update $\gamma$ with $\nabla_\gamma \left(\frac{1}{m}\sum_{i=1}^m -\log (1-D_\gamma(y_{t-1,i},x_{t,i},t))\right)$.
  \STATE Set $\bar{X}=((x_{t-1,1},D_\gamma(x_{t-1,1},x_{t,1},t)),\ldots,  (x_{t-1,m},D_\gamma(x_{t-1,m},x_{t,m},t)) )$.
  \STATE Set $\bar{Y}=((y_{t-1,1},D_\gamma(y_{t-1,1},x_{t,1},t)),\ldots,  (y_{t-1,m},D_\gamma(y_{t-1,m},x_{t,m},t)) )$.
  \STATE Set $\bar{X}_1,\bar{X}_2 = \bar{X}[:m/2],\bar{X}[m/2:]$, and $\bar{Y}_1,\bar{Y}_2 = \bar{Y}[:m/2],\bar{Y}[m/2:]$
  \STATE Update $\phi$ with $\nabla_\phi \left(2 \mathcal{D}^2(\bar{X},\bar{Y}) - \mathcal{D}^2(\bar{X}_1,\bar{X}_2) - \mathcal{D}^2(\bar{Y}_1,\bar{Y}_2)\right)$
  \ENDWHILE
  
 \STATE \textbf{Return:} $\phi$
\end{algorithmic}
\end{algorithm}

\section{Additional Experimental Details}
\label{sec:add_exps}

 \begin{figure*}[t]
\begin{center}
  \begin{tabular}{cc}
  \widgraph{0.5\textwidth}{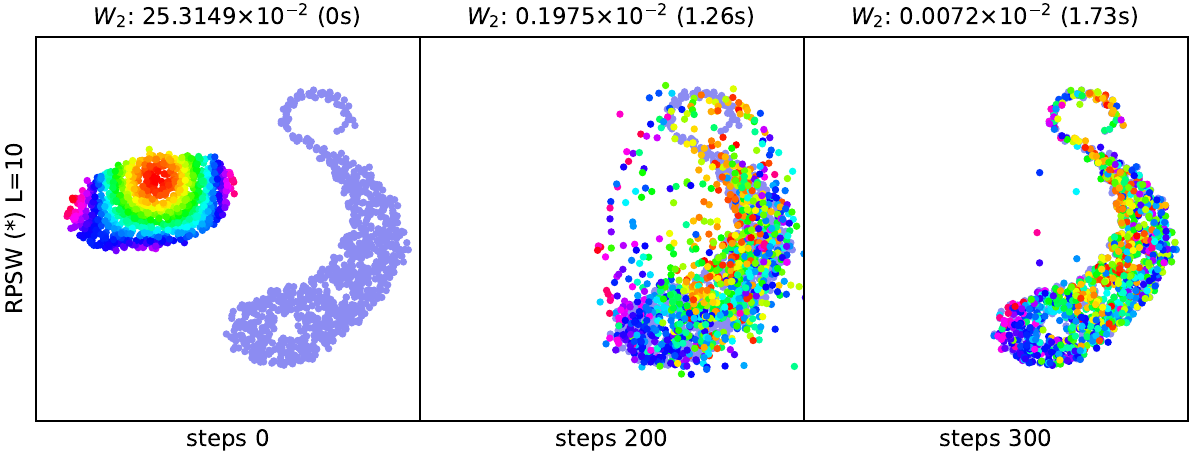}  & \widgraph{0.5\textwidth}{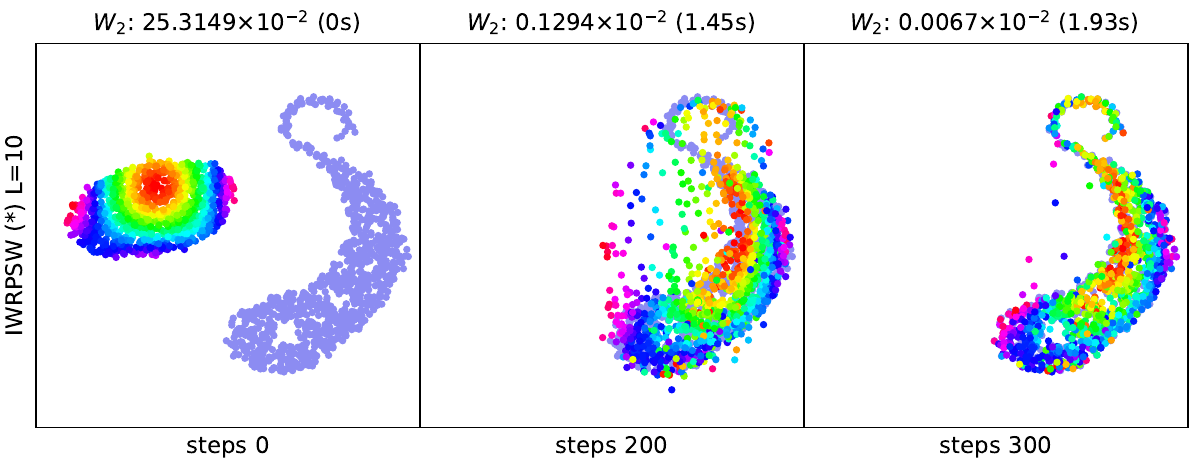} \\
  \end{tabular}
  \end{center}
  \vskip -0.2in
  \caption{
  {Results for gradient flows that are from the empirical distribution over the color points to the empirical distribution over S-shape points produced by different RPSW and IWRPSW with original gradient estimator.
}
} 
  \label{fig:gf_grad}
  \vskip -0.1in
\end{figure*}

\textbf{Gradient flows with different gradient estimators.} We run experiments on gradient flow with the original gradient estimators of RPSW and IWRPSW in Figure~\ref{fig:gf_grad}. We see that the original gradient estimators lead to faster convergence in terms of Wasserstein-2 distances. However, the inner topology between points, presented in colors, is destroyed especially for RPSW. The reason for this behavior is because of the independent coupling in constructing random paths. Therefore, we suggest using the simplified gradient estimator and treating the random-path slicing distribution as a separate distribution for projecting direction selection.

\textbf{Gradient flows on MNIST digits.} We present the full visualization of the gradient flows on MNIST in Figure~\ref{fig:gf_mnist_1000}.

\textbf{Neural network architecture for diffusion model.} Our generator follows the U-net structure in~\cite{xiao2021tackling}  which has 2 ResNet blocks per scale, 128 initial channels, channel multiplier for each scale: (1, 2, 2, 2), scale of attention block: 16, latent dimension 256, 3  latent mapping layers,   latent embedding dimension: 512. For the discriminator, the order of the layers are 1 × 1 conv2d, 128 $\to$ ResBlock, 128 $\to$ ResBlock down, 256
 $\to$ ResBlock down, 512 $\to$ ResBlock down, 512 $\to$ minibatch std layer $\to$ Global Sum Pooling $\to$ FC layer to scalar. For other settings, we refer the reader to~\citep[Section C]{xiao2021tackling}. 
 
\textbf{Hyper-parameters.}  We set  initial learning rate for discriminator to $10^{-4}$, initial learning rate for generator to $1.6 \times 10^{-4}$,  Adam optimizer with parameters $(0.5,0.9)$, EMA to $0.9999$, batch-size to $256$. For the learning rate scheduler, we use cosine learning rate decay.

 \begin{figure*}[t]
\begin{center}
  \begin{tabular}{cccc}
  \widgraph{0.19\textwidth}{images/SW_L1000_0.png}  & \widgraph{0.19\textwidth}{images/SW_L1000_99.png} & \widgraph{0.19\textwidth}{images/SW_L1000_999.png}  & \widgraph{0.19\textwidth}{images/SW_L1000_4999.png}\\
SW  (Step 0) & SW (Step 100) & SW (Step 1000) & SW (Step 5000)\\

  \widgraph{0.19\textwidth}{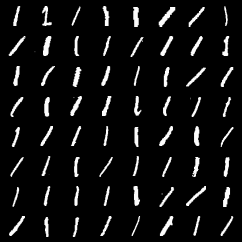}  & \widgraph{0.19\textwidth}{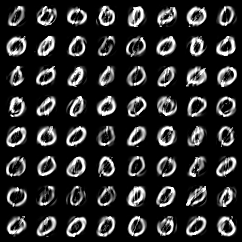} & \widgraph{0.19\textwidth}{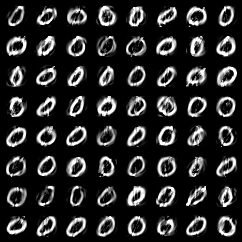}  & \widgraph{0.19\textwidth}{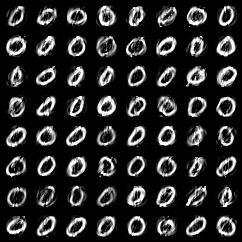}\\
Max-SW  (Step 0) & Max-SW (Step 100) & Max-SW (Step 1000) & Max-SW (Step 5000)\\
  \widgraph{0.19\textwidth}{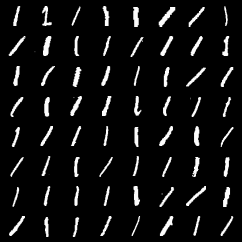}  & \widgraph{0.19\textwidth}{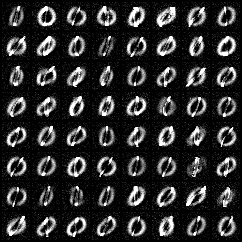} & \widgraph{0.19\textwidth}{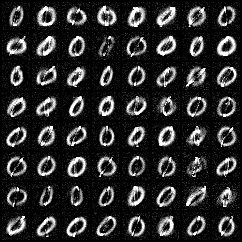}  & \widgraph{0.19\textwidth}{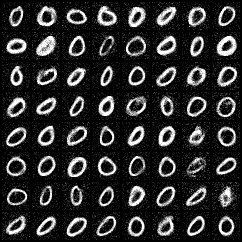}\\
DSW (Step 000) & DSW (Step 100) & DSW (Step 1000) & DSW (Step 5000)\\
\widgraph{0.19\textwidth}{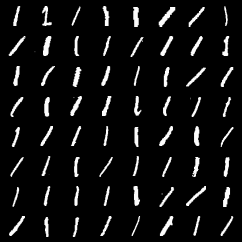}  & \widgraph{0.19\textwidth}{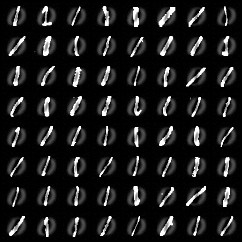} & \widgraph{0.19\textwidth}{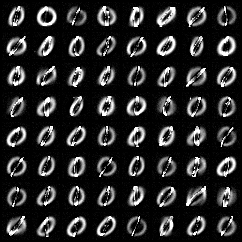}  & \widgraph{0.19\textwidth}{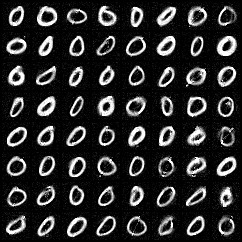}\\
EBSW (Step 0) & EBSW (Step 100) & EBSW  (Step 1000) & EBSW  (Step 5000)\\

\widgraph{0.19\textwidth}{images/RPSW_L1000_0.png}  & \widgraph{0.19\textwidth}{images/RPSW_L1000_99.png} & \widgraph{0.19\textwidth}{images/RPSW_L1000_999.png}  & \widgraph{0.19\textwidth}{images/RPSW_L1000_4999.png}\\
RPSW (Step 0) & RPSW (Step 100) & RPSW (Step 1000) & RPSW (Step 5000)\\

\widgraph{0.19\textwidth}{images/IWRPSW_L1000_0.png}  & \widgraph{0.19\textwidth}{images/IWRPSW_L1000_499.png} & \widgraph{0.19\textwidth}{images/IWRPSW_L1000_999.png}  & \widgraph{0.19\textwidth}{images/IWRPSW_L1000_4999.png}\\
IWRPSW (Step 0) & IWRPSW  (Step 100) & IWRPSW  (Step 1000) & IWRPSW (Step 5000)\\

  \end{tabular}
  \end{center}
  \vskip -0.2in
  \caption{Gradient flows from MNIST digit 1 to MNIST digit 0
  {.
}
} 
  \label{fig:gf_mnist_1000}
  \vskip -0.1in
\end{figure*}

\section{Computational Infrastructure}
\label{sec:infra}

For the gradient flow experiments, we use a HP Omen 25L desktop for conducting experiments. For diffusion model experiments, we use a single NVIDIA A100 GPU.
\end{document}